\documentclass[sigconf]{acmart}
\usepackage[english]{babel}
\usepackage{booktabs} 
\usepackage[vlined,ruled]{algorithm2e}
\usepackage{graphicx}
\usepackage{tabu}
\usepackage{microtype}
\usepackage{subcaption}
\usepackage{algpseudocode}
\newcommand{\id}[1]{\mbox{$\mathsf{#1}$}}
\renewcommand{\vec}[1]{\mathbf{#1}}
\newcommand{\set}[1]{\mathbf{#1}}
\newcommand{\dur}[1]{\tau(#1)}
\newcommand{\R}{T_s}
\newcommand{\D}{\tau}
\newcommand{\pattern}{feature}
\DeclareMathOperator*{\argmin}{arg\,min}
\DeclareMathOperator*{\argmax}{arg\,max}
\newcommand{\spaceprint}{\emph{Spaceprint}}
\newcommand{\densitybased}{\emph{density-based}}
\newcommand{\ts}[1]{\mbox{\scriptsize\emph{#1}}}

\begin{document}
	\sloppy
\title{\textit{Spaceprint}: a Mobility-based Fingerprinting Scheme 
	for Public Spaces}

\author{Mitra Baratchi, Geert Heijenk, Maarten van Steen}
\affiliation{%
  \institution{University of Twente}
  \city{Enschede} 
  \state{The Netherlands} 
}
\email{{m.baratchi, geert.heijenk, m.r.vansteen}@utwente.nl}

\begin{abstract}

In this paper, we address the problem of how automated situation-awareness can be achieved by learning real-world situations from ubiquitously generated mobility data.
Without semantic input about the time and space where situations take place, this turns out to be a fundamental challenging problem.
Uncertainties also introduce technical challenges when data is generated in irregular time intervals, being mixed with noise, and errors.

Purely relying on temporal patterns observable in mobility data, in this paper, 
we propose \textit{Spaceprint}, a fully automated algorithm for finding the 
repetitive pattern of similar situations in spaces. We evaluate this technique 
by showing how the latent variables describing the category, and the actual 
identity of a space can be discovered from the extracted 
situation patterns.
Doing so, we use different real-world mobility datasets with data about the presence of mobile entities in a variety of spaces.
We also evaluate the performance of this technique by showing its robustness against uncertainties.

\end{abstract}

\keywords{Spatial profiling, WiFi scanning, Spatio-temporal 
	analysis, Mobility pattern mining, Point of interest classification}

\maketitle
\section{Introduction}

Many situational-aware decision-support systems rely on the capability of describing the situation in spaces.
Ideally, these descriptions are updated automatically as the situation in a space changes.
Typical examples include automatically identifying a bottleneck on a road, or a suspicious activity in an airport.
A means for learning and comparing situations from the abundance of 
ubiquitously generated mobility data (GPS coordinates, check-in records, WiFi 
detections, etc.)  can open the door to many applications that require such 
automated 
situational-awareness.
As a first step towards this goal, in this paper, we investigate how mobility data can represent the repetitive pattern of situations in spaces.

In many cases, a specific space with a known category such as a library, a 
canteen, or a classroom, will exhibit repetitive visiting patterns 
characterizing a recurring situation.
Such patterns effectively operate as a spatial \emph{fingerprint} of situations.
Moreover, we can expect that similar places will often have similar fingerprints.
Although in many cases these fingerprints would seem to be static, it is really the usage of a space that determines its meaning, which at various occasions may differ from the location's original intended purpose.
For example, in special situations an office space is used for throwing a party or, likewise, an apartment can be rented out as if it were a hotel room.
We argue that to better understand or reason about the situation at hand, it is important to understand to what extent the situation in a space adheres to its regular fingerprint, and otherwise, to what extent it resembles any other well-known fingerprints.

In this light, we address the question to what extent we can automatically \emph{measure} a location's fingerprint of situations from available mobility data.
To realize situation-aware systems that are generally applicable, we focus on creating these fingerprints in a completely unsupervised manner.
This implies that these fingerprints should be created from raw mobility data 
without additional human input of any kind.
Therefore, unlike most previous related research in mobility data analysis, our 
method operates without a feature-engineering phase.

To this end, we study the presence pattern of devices by looking at when and how long they appear in a space.
More specifically, we make the following contributions.
(1) We propose a feature set that can generically characterize all possible presence patterns in a space.
(2) We use such a feature set to extract the fingerprint of the repetitive 
situations in spaces (\spaceprint s) in a fully unsupervised manner.
(3) We evaluate the robustness of this fingerprinting scheme in the presence of common sources of uncertainty in ubiquitously collected mobility data sets.
(4) We validate our method by showing its classification performance using a 
WiFi-based detection data set and a Foursquare check-in dataset.

The rest of this paper is organized as follows.
Related work is presented in Section~\ref{sec:related}.
We present our problem definition and a sketch of our proposed fingerprinting 
framework in Sections~\ref{sec:system-model} and \ref{sec:framework}, 
respectively.
The details of our fingerprinting scheme is presented in Section~\ref{sec:methodology}.
The performance of this scheme is evaluated in Section~\ref{sec:evaluation}.
A number of remarks conclude this paper in Section~\ref{sec:conclusion}.

\section{Related work}
\label{sec:related}
There are two ways to study the movement of individuals in a space when dealing 
with mobility data (referred to as the Lagrangian and Eulerian 
approaches~\cite{BaratchiMeratniaHavingaEtAl2013}).
First, from the perspective of an individual, one may ask about the whereabouts of a person: what are the locations that someone visits?
When do those visits take place, and for how long?
The research in this direction concentrates on extracting mobility patterns 
that reflect an \emph{individual mobility fingerprint} for frequent 
behavior \cite{Lian:2015:CCE:2745393.2629557,Giannotti:2007}, periodic behavior 
\cite{Li:2012:MEP:2339530.2339604}, social behavior 
\cite{wang2011human}, etc.

Second, from the perspective of a specific location, 
one may ask about the visits to that location: When do they take place?
How long do they last?
Which visits happen again? In this case, one focuses on extracting a 
\emph{spatial mobility fingerprint}. 
 Previous related research in 
extracting these spatial fingerprints have either focused on improving the 
individual mobility prediction models
\cite{Wang:2015:RCL:2783258,gao2012mobile} or on 
bringing 
sense to 
raw location coordinates in terms of 
meaningful 
labels. Research in methods to describe the meaning of locations, primarily 
concentrates on
how 
accurately trajectories can be segmented into sections with basic semantics 
such 
as 
\textit{stop and move} areas \cite{spaccapietra2008conceptual}, or 
\textit{points of 
interest} \cite{Montoliu2013}. With the prevalence of context-aware mobile  
applications which needed more than just such low-level semantics, further 
research has been performed to extract more detailed semantics about 
spaces interpreted in colloquial terms such as \textit{home}, 
\textit{work}, 
\textit{cinema},
\textit{restaurant}, etc. Using a single person's frequent trajectory patterns, 
 semantics about few predefined places (e.g. 
\textit{home}, \textit{work}) have been extracted in  
\cite{lv2016discovery,Eagle2009}. In a more general case, and when extracting 
semantics from data 
involving a large population of 
mobile entities, a common 
approach
has 
been enriching data 
with higher 
level 
semantics using additional sources, or using common sense assumptions, for 
instance, 
presence at night for \textit{home}, presence at working hours 
for 
\textit{offices} or 
presence in weekends for \textit{leisure related locations}. Some examples of 
additional sources 
of 
semantics are verbal terms used 
by 
people in 
social media such as twitter \cite{7026273,Wu:2015:SSA:2740908.2742837},
and third party geographical sources 
\cite{Yan:2013}. In
 \cite{6427751} the authors use a number of selected mobility 
features (e.g., crowded hours, number of visitors per month) 
along with 
application usage, and proximity to other devices
to 
label a 
group of known spaces.
Knowing the semantic labels of spaces within a region, higher level regional 
semantics 
have also been extracted to label 
\emph{areas} such as those used for \textit{housing}, and \textit{businesses} 
\cite{yuan2015discovering}. 

The spatial fingerprints made thus far are either meant for labeling 
location 
coordinates using \emph{engineered} features in a supervised manner or use 
additional 
\textit{semantic 
input} 
to enrich data with context from other 
sources. 
 These 
approaches are not generic and cannot be taken further to realize automated 
situation-awareness in 
dynamically changing spaces purely using mobility data. To reach 
this goal, 
our approach in spatial fingerprinting from mobility data is different 
from all 
previous research as it performs in a fully unsupervised manner purely 
exploiting 
presence patterns in spaces. Specifically, instead of looking for features that 
characterize 
spaces based on their 
\textit{semantic meaning}, we look for features that can characterize
periods of time in a space based on its \textit{dynamic situation}.


\section{Problem definition}
\label{sec:system-model}

We define a model based on data acquired from any system that allows for the 
collection of mobility data in terms of presence or detection of mobile 
entities in a well-defined region of space.
A \textbf{detection} record is a tuple $\langle \id{d}, \id{s}, t \rangle$ with \id{d} being the identifier of the detected mobile entity, \id{s} being the identifier of the space where the entity \id{d} was detected, and $t$ being the timestamp of the detection.
A variety of mobility-data collection systems can result in such a dataset.
These include, for example, WiFi detection of mobile devices near access points, GPS coordinates discretized in grid maps, and check-in records collected from location-based social networks.

Given a set of detection records $\set{DT}$, we are interested in a \textbf{spatial fingerprint} $\set{SP}(\id{s})$ which defines the core repeating temporal presence patterns of space $\id{s}$.
Assuming that \textit{latent} variables such as the unique \textit{identity} of 
the space and its semantic \textit{category} result in such a fingerprint, we 
demand that this scheme exhibit the following:
(1) each space has a unique fingerprint,
(2)~spaces having the same category have similar fingerprints, and
(3)~spaces having different categories have different fingerprints.

\section{Framework overview}
\label{sec:framework}

Our goal is to define a spatial fingerprint that summarizes the situations in a space in terms of repeating presence patterns over time.
One might think of creating a time series by measuring a feature from the detections over equally sized duration windows with a specific resolution, such as the number of detections (\emph{feature}) during every hour (\emph{resolution}) of a day (\emph{duration}).
By averaging the value of these features over many duration windows (e.g., over 100 days), the fingerprint can be extracted.
If these features were enough to fingerprint a space, with a suitable classification algorithm and suitable distance function, we would also be able to classify different spaces from one another based on their fingerprint.
However, there are many unknown factors that require attention.
The challenge in our case is to identify (1) the \textbf{features}, (2) an appropriate \textbf{resolution} and \textbf{duration window}, and (3) a suitable \textbf{distance function}.
Compared to these three, the choice of a classification or clustering algorithm is a trivial one.
Typically, these challenges are addressed based on intuition.
For instance, we may assume that a weekly pattern governs the visits to a space 
or that a resolution of one hour is enough to provide the necessary level of 
detail.
This intuitive approach, however, limits the applicability of the fingerprinting scheme.
The proposed fingerprinting scheme in this paper addresses these challenges through systematically finding appropriate parameter settings in an unsupervised manner.
We define a spatial fingerprint as follows.
\begin{definition}\textit{{(Spatial fingerprint)}
    \label{def:fingerprint}
    The \textbf{fingerprint for the space} $\id{s}$ is a triplet $\mathbf{SP}(\id{s})=\langle \vec{V},FD,FR \rangle$, with \textbf{feature vector} $\vec{V}=[v_1, \ldots, v_n]$, of which each element $v_i$ represents the value of a specific feature.
    $FD$ is the \textbf{fingerprint duration}, indicating the total time over which the fingerprint is configured.
    $FR$ is the \textbf{fingerprint resolution}, indicating the minimum time interval over which detections are sampled to extract features.
    $FD$ is a multiple of $FR$: $\exists r \in \mathbb{N}: FD = r \cdot FR$.}
\end{definition}
 
Algorithm \ref{algorithm1} summarizes the fingerprinting framework \emph{Spaceprint} proposed in this paper.
Let $\set{DT}$ denote a set of detections and $t_{\ts{min}}(\set{DT}) = \min \{t | \langle \id{d}, \id{s}, t \rangle \in \set{DT}\}$, i.e., the timestamp of the first detection.
Likewise, we have $t_{\ts{max}}(\set{DT})$ for the timestamp of the last 
detection and $\dur{\set{DT}} = t_{\ts{max}}(\set{DT})-t_{\ts{min}}(\set{DT})$ 
for the duration of collecting $\set{DT}$.
Denote by $\set{\overline{DT}}$ the set of detections $\{\langle \id{d}, \id{s}, t-t_{\ts{min}} \rangle | \langle \id{d}, \id{s}, t \rangle \in \set{DT} \}$, i.e., the set of same detections, but now transformed such that the first detection starts at time 0.
Finally, we use the notation $\set{DT}(\id{s})= \{ \langle \id{d}, \id{s}, t \rangle | \langle \id{d}, \id{s}, t \rangle \in \set{DT} \}$ to denote the set of detections by space \id{s}.

The spatial fingerprint is composed of three components.
We have a separate procedure for extracting each of these components.
We use the procedure \emph{fingerprintParameters} for calculating the optimal fingerprint parameters, being the fingerprint duration ($FD$) and fingerprint resolution ($FR$).
The procedure \emph{vectorize} constructs the feature vector over a dataset spanning a duration of $FD$ time units using resolution $FR$.
The final procedure \emph{vectorAverage} computes the average over multiple feature vectors.
In the following sections, we will represent how we create the feature vector and determine the fingerprint duration and resolution.
    
\begin{algorithm}
	
  \KwData{$\set{DT}(\id{s})$}
  \KwResult{ $\mathbf{SP}(\id{s}) =\langle \vec{V},FD,FR \rangle$}
  $(FD,FR)=\mbox{\emph{fingerprintParameters}}(\mathbf{DT}(\id{s}))$;
  
  \For {$(i=0 ;i < {\dur{\set{DT(\id{s})}}}/{FD}; i=i+1)$}{
    $\set{DT}_i = \{\langle \id{d},\id{s},t \rangle \in 
    \set{\overline{DT}}(\id{s}) | i\cdot FD \leq t < (i+1)\cdot FD \}$;
	${\vec{V}_i=\mbox{\emph{vectorize}}(\set{\overline{DT}}_i,FD,FR)}$;
  }
  
  ${\vec{V}=\mbox{\emph{vectorAverage}}(\vec{V}_{i\in 
  {1...\dur{\set{DT(\id{s})}}/FD}})}$;
  
  return ($\vec{V},FD,FR$);
  
  \caption{Spaceprint}
  \label{algorithm1}
  
\end{algorithm}

\section{Methodology}
\label{sec:methodology}

\subsection{Presence patterns}

As mentioned before, the most important step in fingerprinting spaces is identifying suitable (computable) features that represent the situation in spaces.
Let us consider selecting features that may be relevant for such purpose and are observable from mobility data.
For example, intuitively one may think of static features such as opening or closing hours, peak hours, group sizes, number of individuals, etc.
However, features that can define the situation in a space are numerous and intuitively coming up with a comprehensive set of features that could characterize any thinkable situation in spaces is practically impossible.

Without any intuitive assumptions about features that define the situation in a space, the only measurable feature from detections is related to presence pattern of mobile entities.
In reality, each space observes many of these patterns formed due to the variety of the intention of its visitors.
For instance, consider the presence pattern of shopkeepers in a shop versus that of their clients.
A shopkeeper enters the shop around opening time and leaves around closing time.
The clients may appear during opening hours and stay for some time based on their intention (browsing or shopping).
We assume that the situation in space is reflected in the overlapping visits of different groups of mobile entities.
To consider this variety, we define a presence pattern such that it reflects \textbf{the synchronous presence of a group of mobile entities during a specific period of time}.
Such a pattern represents a group of mobile entities entering a space, staying there for a specific amount of time, and then leaving it at the same time.
Entering and leaving a space may be repeated multiple times as well.
Extracting these patterns from a detection dataset can be achieved by counting the number of mobile entities in a window with a specific starting time, $t_{\ts{start}}$, and duration, $\tau$.
As detections are registered in discrete time intervals, the presence should be detected in all sampling intervals {of length $T_s$} in $\tau$.
Correspondingly, we define presence features with the following \emph{template} to quantify the intensity of such patterns.

\begin{definition}\label{def:pattern} \textit{ {(Presence \pattern)}
    A \textbf{presence \pattern} $PF(t_{\ts{start}}, \tau, T_s)$ over 
    a space represents the number of mobile entities that were detected in all 
    $\lceil \tau/T_s \rceil $ consecutive sampling intervals of length $T_s$ 
    within a 
    measurement window, starting at time $t_{\ts{start}}$ and lasting for a 
    duration of $\tau$ time units.}
\end{definition}

By ranging over all possible values of the parameters $t_{\ts{start}},\tau$, and $\R$, the feature template mentioned above will lead to numerous presence features.
Table~\ref{tbl:range} summarizes the possible range of these parameters for creating presence \pattern s as defined in Definition~\ref{def:pattern}.
\begin{table}[!ht]
  \caption{\label{tbl:range}The possible ranges for the parameters of a presence pattern, given a fingerprint duration $FD$ and fingerprint resolution $FR$.}
  \begin{center}
    \begin{tabular}{cl}\hline
      \textbf{Variable Name} & \textbf{Range} \\ \hline
      $t_{\ts{start}}$    &  $\{0 \leq k \cdot FD/FR < FD, k \in \mathbb{N} \}$ \\
      $\tau$        &  $\{0 \leq k \cdot FD/FR < FD - t_{\ts{start}}, k \in 
      \mathbb{N}\}$ \\
      $T_s$         &  $FR \ll T_s \ll  FD$ \\ \hline
    \end{tabular}
  \end{center}
\end{table}

These parameter ranges are motivated as follows.
Assume that we measure detections at a given location for a specific duration of time, $FD$, and that the mobile entities are detected at a frequency $f_p$ (and period $T_p= 1/f_p$).
For now, also assume that the fingerprinting resolution $FR$ is equal to this period as well ($T_p=FR$).
We later show how to extract the optimal value for $FR$ which is possibly bigger than $T_p$.
The basis of our approach is to sample the number of mobile entities within a specific \textbf{duration window} $W = \langle t_{\ts{start}},\tau \rangle$ with a \textbf{sampling frequency} $f_s$ (with period $T_s= 1/f_s$).
Both $W$ and $f_s$ can vary.
The duration window can have any starting time and length as long as the window is smaller than $FD$.
Therefore, we require that $\tau \leq FD$ and $t_{\ts{start}}+\tau < FD$.
To count the number of mobile entities, we need to sample detections with a period $T_s$.
Obviously, as it does not make sense to sample with a speed faster than the mobile entity's detection generation speed, we require that $T_s \geq FR$ (or $T_p$).
Additionally, $T_s$ cannot be larger than the duration window, i.e., $T_s \leq \tau$.
Note that the feature vector $\vec{V}$ can now be considered as an ordered list of normalized presence \pattern s.

\subsection{Feature vector}

As mentioned before, the presence \pattern s can be created by counting mobile entities based on every possible combination of starting time, stay duration, and sampling period, $t_{\ts{start}}, \tau, T_s$.
Considering that we have $n$ possible combinations by ranging over these parameters, we will have an $n$-dimensional vector composed from different presence \pattern s.
Algorithm~\ref{algorithm2} (\emph{vectorize}) represents the way of constructing a feature vector for a given space based on a collection of mobile entity detections.

The input of this algorithm is a set of detections $\set{DT}(\id{s})$ for a specific space \id{s}.
If $W$ is a duration window, we write $\set{DT}[W]$ to denote the subset of detections that occurred inside $W$.
If $\R$ is a sampling period, then $[\set{DT}]_{\R}$ denotes the list of $\lceil (t_{\ts{max}}(\set{DT}) - t_{\ts{min}}(\set{DT}))\rceil/{\R}$ buckets, with the $i^{th}$ bucket containing all detections that occurred during the $i^{th}$ interval of length $\R$.

The essence of \emph{vectorize} is to count the number of mobile entities that were detected during an entire duration window, $W$, when sampled with the period $T_s$.
We systematically explore every possible duration window and sampling period for a given fingerprint duration $FD$ and resolution $FR$.
There are three loops for covering all possible values for parameters 
$t_{\ts{start}}, \tau$ and $T_s$.
In each iteration, by counting the mobile entities 
that 
appeared in the intersection of
all buckets of $[\set{\overline{DT}}[W]]_{\R}$, a presence feature is created.

\begin{algorithm}
	\KwData{$\set{\overline{DT}}$, $FD$, $FR$}
	\KwResult{$\vec{V}$}
	$\vec{V}=[]$;\\
	\For {$(t_{\ts{start}}=0; t_{\ts{start}} < FD; t_{\ts{start}}=t_{\ts{start}}+FR)$}{ 
	  \tcc{iterate over all durations}	
	  \For{$(\tau = FR; \tau \leq FD - t_{\ts{start}}; \tau = \tau + FR)$}{
		\For {$(\R=FR; \R \leq \D; \R=\R+FR )$}{
		  \tcc{iterate over all sampling periods}
		  \If {$(\D\ mod\ \R=0)$}{
			$W = \langle t_{\ts{start}}, \D \rangle$\;
			$u=\bigcap([\set{\overline{DT}}[W]]_{\R})$;
			\tcc{get the ID of mobile entities present in all buckets of window 
			W}
			$append(\vec{V},count(u))$;
			\tcc{append to $\vec{V}$ the total number of mobile entities}
          }
        }
      }
    }
	return$(\vec{V}/\max(\vec{V}))$;
	\caption{vectorize}
	\label{algorithm2}
	
\end{algorithm}

The complexity of Algorithm~\ref{algorithm2} presented above is $
\mathcal{O}((\frac{FD}{FR})^4)$. 
This complexity comes from the three for loops and an intersection over all elements of $[\set{\overline{DT}}[W]]_{\R}$. 
By reusing the results of the intersection operation this complexity can be reduced to 
$\mathcal{O}((\frac{FD}{FR})^3)$.
A meaningful sampling period is the one that can break the duration window into its integer factors $(\D\ mod\ \R=0)$. 
In that case, the third loop will repeat only for integer multiples of $\D$, 
thus reducing complexity further.
It should be noted that both $FD$ and $FR$ are fixed and do not depend on the size of detection dataset.
Therefore, creating the feature vectors can be performed in a scalable manner.

Figure~\ref{vector} represents an example feature vector $\vec{V} = [v_1,\ldots,v_n]$ calculated using Algorithm~\ref{algorithm2}.
This vector is acquired by vectorizing one week of data with a resolution of a day (i.e., $FD = 7$ days and $FR = 1$ day).
It can be readily verified that there are $n = 57$ elements in $\vec{V}$.
The first element, $v_1$, corresponds to the number of mobile entities that 
were detected during the first day: $W = \langle 0,1 \rangle$, with resolution 
$\R=1$.
Element $v_2$ counts the mobile entities that were present during both the first and the second day: $W = \langle 0,2 \rangle$, with sampling period $\R=1$.
Likewise, $v_3$ represents mobile entities during the either first or second day, and so on.
In this example, $v_{15}$ represents a window spanning over the entire week ($W = \langle 0,7 \rangle$) and sampled with the sampling period $\R=1$.
It thus counts the number of mobile entities that were present in all seven 
days.
Typically, these encompass all static, that is, non-mobile entities.
Also interesting is $v_{16}$, which represents a duration window spanning over all seven days ($W = \langle 0,7 \rangle$), but with a sampling period $\R = 7$ of also the entire week.
As such, it counts the total number of mobile entities who showed up at least once during the entire week, regardless how long they stayed.

\begin{figure}
  \begin{center}
    \includegraphics[width=0.27\textwidth]{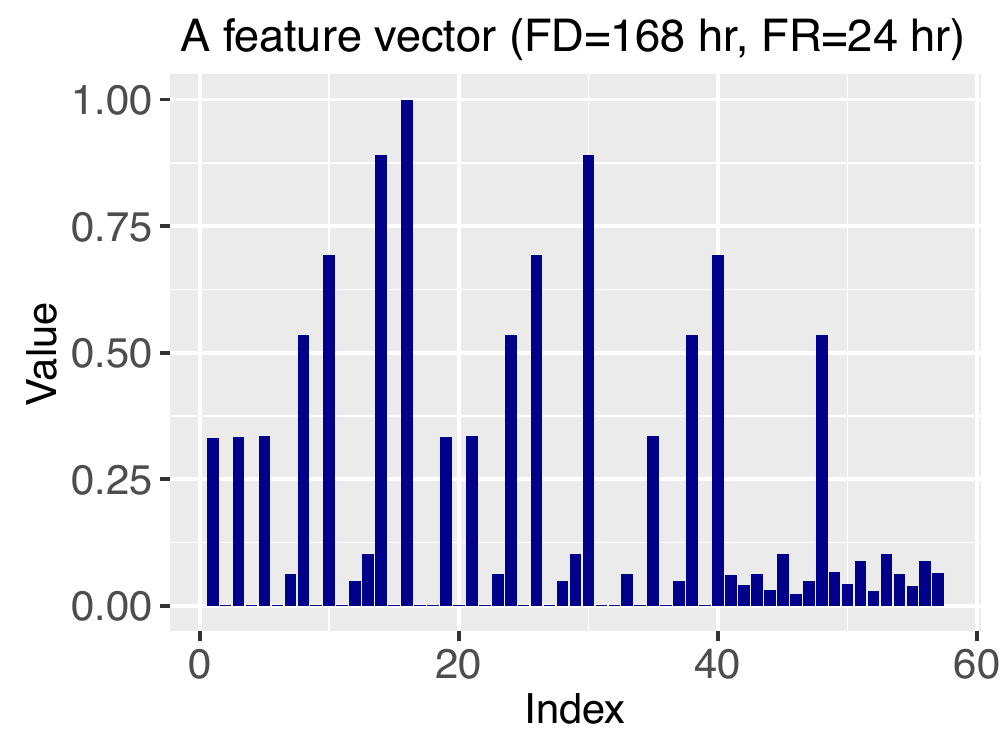}
    \caption{An example representation of a feature vector}
    \label{vector}
  \end{center}
\end{figure}

Our goal is to use such feature vectors to compare spaces to each other based on visiting patterns of devices.
In doing so, we need to take into account that the values in a single vector can vary widely, which is entirely due to the fact that we wish to include all possible values for duration windows and sampling periods into a single data structure.
As a consequence, we need to avoid that high values (which are perfectly natural due to our method of counting) dominate our perspective of difference between two vectors.
In order to take these natural differences between elements into account, we choose a distance metric based on the so-called \emph{Canberra Distance}~\cite{emran2001robustness}.
\begin{definition} \textit{(Feature vector distance function)
    \label{def:distance}
    Given two feature vectors $\vec{V}$ and $\vec{V^*}$ of equal length $n$, calculated using the same pair of fingerprint parameters $FD$ and $FR$, their mutual distance is $\Delta(\vec{V},\vec{V^*})=\frac{1}{n}\sum_{i=1}^n\frac{|v_i-v_i^*|}{|v_i|+|v_i^*|}$ }
\end{definition}

\subsection{Fingerprint parameters}

We now concentrate on finding appropriate values for the fingerprint duration $FD$ and the fingerprint resolution $FR$.
Concerning the \textbf{fingerprint duration}, note that we are looking for the period (in the formal sense) of repetitive or self-similar situation.
There are many ways of doing this, for example through Fourier analysis or computing auto\-correlations.
In our approach, we look for a series of consecutive fixed-length windows $W_1,W_2,W_3,\ldots$ such that for a given set of detections $\set{DT}$, we have a minimal accumulated distance between all possible pairs of vectorized subsets of detections $\set{DT}[W_i]$ and $\set{DT}[W_j]$.
Our only variable is the length of all such windows, and the length that 
\textit{minimizes} the accumulated distance is our fingerprint duration.

Determining the best \textbf{fingerprint resolution} is a bit trickier.
The resolution, as shown in Algorithm~\ref{algorithm2}, determines the minimum sampling period and directly determines the number of features in the vector.
Therefore, other than increasing the computational costs, a too detailed $FR$ may also introduce the problem of over-fitting.
It is desirable to choose the resolution such that all significant differences between feature vectors are preserved.
Therefore, what we are looking for is a resolution that \emph{maximizes} the distance between two vectorized datasets.
The assumption is that we have already determined the periodicity $FD$ in a series of detections.
By looking at two consecutive datasets of duration $FD$, a resolution $FR$ that maximizes the mutual distance of their vectorized versions effectively captures all differences that would have also been captured by a smaller resolution.
At the same time, such a resolution will capture more differences than any larger resolution (which would show a smaller distance between the two vectorized datasets).

Lemma~\ref{lemma:fr} tells us that such a distance-maximizing resolution 
actually exists.

\begin{lemma}\label{lemma:fr}

In case a space has a periodic fingerprint, there exists an optimal fingerprint resolution $FR$ over which the distance between consecutive feature vectors is maximized.
	
\end{lemma}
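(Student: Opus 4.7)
The plan is to prove existence by reducing the problem to maximizing a bounded real-valued function over a finite, non-empty set, and then to connect the maximizer to the periodicity hypothesis so that the optimum is not degenerate. First I would delimit the admissible set of resolutions. By Definition~\ref{def:fingerprint}, $FD$ must be an integer multiple of $FR$, and from the discussion following Table~\ref{tbl:range}, $FR$ is bounded below by the detection period $T_p$. Hence $\mathcal{F} = \{FR : T_p \leq FR \leq FD,\ FD \bmod FR = 0\}$ is a finite, non-empty set of candidate resolutions.

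Next, for every $FR \in \mathcal{F}$, Algorithm~\ref{algorithm2} constructs a feature vector of finite length $n(FR)$ depending only on $FD/FR$, and by Definition~\ref{def:distance} the Canberra distance $\Delta(\vec{V}, \vec{V^*})$ between the vectorizations of two consecutive $FD$-length windows is a real number in $[0,1]$. Maximizing a bounded real-valued function over a finite, non-empty set yields the existence of some optimizer $FR^* \in \mathcal{F}$, which is the lemma's claim. To tie this to the periodicity assumption and explain why the maximizer is not trivial, I would sketch the two extreme regimes. When $FR$ approaches $FD$, the vectors collapse to a handful of coarse counts and most distinguishing detail is lost, shrinking $\Delta$. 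When $FR$ approaches $T_p$, $n(FR)$ grows as $(FD/FR)^3$; under periodicity the two consecutive windows are statistically similar at the finest scales, so the extra features contribute Canberra terms close to $0$ while inflating the $1/n$ normalizer, diluting the average distance.

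The main obstacle I anticipate is formalizing the fine-scale dilution argument: ``nearly identical under periodicity'' requires a precise statement about how the two consecutive $FD$-length windows relate to each other, and about how the ratios $|v_i - v_i^*|/(|v_i| + |v_i^*|)$ behave when both counts come from equivalent phases of a periodic process. Once such a dilution property is in hand, the combination of coarse-end information loss, fine-end dilution, and the finiteness of $\mathcal{F}$ jointly forces the maximum to sit at an interior, non-trivial resolution. The remaining steps, in particular the finite-domain existence argument itself, are routine applications of the definitions.
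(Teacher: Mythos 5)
Your proposal is essentially sound and follows the same overall strategy as the paper: argue that the distance degenerates at both extremes of the resolution range, so the maximizer sits somewhere in between. Your finite-domain existence argument ($\mathcal{F}$ finite and non-empty, $\Delta$ bounded in $[0,1]$, hence a maximizer exists) is actually more rigorous than anything in the paper for the literal existence claim, and your coarse-end observation matches the paper's, which notes that at $FR = FD$ the vector has length one and normalizes to the constant $1$, giving $\Delta = 0$ exactly. The one place you diverge --- and the gap you yourself flag --- is the fine-resolution end. You propose to close it by arguing that, under periodicity, consecutive windows are ``statistically similar'' at fine scales so the extra Canberra terms are close to $0$. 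That route is fragile: the Canberra term $|v_i - v_i^*|/(|v_i| + |v_i^*|)$ is scale-free, so sparse fine-scale features with counts like $1$ versus $0$ contribute a full $1$ to the sum, and mere statistical similarity of the underlying process does not control this. The paper closes the gap differently and more cheaply: features acquired with a sampling period below the detection period $T_p$ count entities that would have to be detected faster than they actually emit detections, so these entries are (essentially) zero in \emph{both} vectors and contribute nothing; the number of indices with $\delta_i > 0$ is therefore bounded by a constant $M$ independent of $n$, giving $\Delta \leq M/n \rightarrow 0$. If you keep your restriction $FR \geq T_p$, your set $\mathcal{F}$ excludes exactly the regime where the paper's argument applies, so you would need either to drop that lower bound and adopt the paper's zero-count observation, or to genuinely formalize the dilution property you describe --- the latter being substantially harder than the former.
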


\begin{proof}
  We prove that having a constant fingerprint duration, by either increasing or decreasing the resolution, the distance between two features vectors $\vec{V}$,$\vec{V}^*$ approaches zero.
  Let $\delta_i = \frac{|v_i-v_i^*|}{|v_i|+|v_i^*|}$.
  When we increase the resolution, we will necessarily increase the length $n$ of a constructed feature vector.
  As both $v_i$ and $v_j$ have positive values, regardless the changes in $\delta_i$ when increasing $n$, we will always see that $\delta_i \leq 1$, while the number of elements for which $\delta_i > 0$ will increase to a finite number $M$.
  This is due to the fact that elements acquired with a smaller sampling period 
  ($T_s<T_p$) are meant to count the mobile entities that were detected with a 
  speed much faster than the actual detection speed of mobile entities and 
  there are hardly any of them.  As a consequence,
  \[
  \lim_{n\rightarrow \infty} \Delta(\vec{V},\vec{V^*}) = \lim_{n\rightarrow \infty} \frac{1}{n} \sum_{i=1}^n \delta_i \leq \lim_{n\rightarrow \infty} \frac{1}{n} M = 0
  \]
  Analogously, as the resolution decreases, the length of a feature vector decreases and will eventually be 1 when $FR$ = $FD$.
  A vector of length one will have only one element, which after normalization, is equal to 1.
  Therefore,
  \[
  \lim_{n\rightarrow 1} \Delta(\vec{V},\vec{V^*}) = \frac{1}{1} \frac{|1 - 1|}{|1 + 1|} = 0
  \]
\end{proof}

Algorithm \ref{algorithm3} summarizes the procedure of extracting the fingerprint parameters.

\begin{algorithm}
   \KwData{$\set{DT}(\id{s})$, $r$ (such that $FD = r \cdot FR$)}
   \KwResult{$FD$,$FR$ }
     \For {$(i =1;i<\dur{\set{DT}}/{(2r)}; i=i+1)$ }{
       $m = i \cdot r$;\\
       \For{$(j=0;j<\dur{\set{DT}}/m;j=j+1)$}{
         $\set{DT}_j = \{\langle \id{d},\id{s},t \rangle \in \set{\overline{DT}}(\id{s}) | j \cdot m \le t < (j+1) \cdot m \}$;\\
         $\vec{V}_j^i$=  vectorize($\set{\overline{DT}}_j,m,i$);
       }
     }
     $FD$= $r \cdot \argmin_{i} \sum_{j,k}\Delta(\vec{V}_j^i,\vec{V}_k^i)$;\\
     \For {$(i=1; i\leq FD ;i=i+1)$ }{
       \If {($ FD\ mod\ i =0$)}{
         \For{$(j=0;j <\dur{\set{DT}}/FD;j=j+1)$}{
	       $\set{D\hspace*{-1pt}T}_j\! =\! 
           \{\langle \id{d},\id{s},t \rangle \in \set{\overline{DT}}(\id{s}) | j \cdot FD \le t\! < (j + 1) \cdot FD \}$;\\           
	       $\vec{V}_j^i$= vectorize($\set{\overline{DT}}_j,FD,i$);
	     }
       }
     }
     $FR$= $\argmax_{i} \sum_{j,k}\Delta(\vec{V}_j^i,\vec{V}_k^i)$;\\
     return($FD$,$FR$)
       
    \caption{fingerprintParameters}
    \label{algorithm3}
\end{algorithm}

\section{Evaluation}
\label{sec:evaluation}
In this section, we show how \textit{Spaceprint} feature vectors can be used 
for finding repetitive situation patterns in spaces.
We also evaluate the performance of \spaceprint\ in presence of uncertainties.

\textbf{Evaluation approach:} We expect that the 
fingerprint of situations in
a space can reflect from which and what kind of space it is extracted. 
Therefore, 
we evaluate our method to see how the latent variables of the 
semantic category of a space and its 
unique identity are reflected in the fingerprint of the space. 
Our evaluations are on the basis of using the feature vectors 
mentioned before in unsupervised classification to infer these latent variables.
Any unsupervised classification or clustering algorithm can be used for such 
purpose.
In our experiments we have used \emph{K-means} clustering algorithm using our 
defined distance function from Definition~\ref{def:distance}.

\textbf{Baseline:} To the best of our knowledge, there is no prior work in 
classifying or creating situation fingerprints for spaces purely based on 
presence patterns.
However, a common approach in considering space-specific temporal features, is 
calculating hourly densities \cite{gao2012mobile,yang2016poisketch}.
Therefore, we compare \spaceprint\ with a \emph{density-based} approach as baseline.
The \emph{density-based} feature vectors are represented by 
$\vec{V}_d=[d_0,...,d_{{\frac{FD}{FR}}-1}]$ where each element $d_i$ represents 
the number of mobile entities appearing in the window $W =\langle i \cdot 
FR,FR \rangle $.
These vectors are extracted using the same fingerprint parameters ($FD, FR$).

\subsection{Test with synthetic dataset}

\subsubsection{Synthetic dataset generation}

Our goal of using a synthetic dataset is to test the robustness of the fingerprinting scheme against uncertainties, yet in a controlled fashion.
We proceed as follows.

\textbf{Generating virtual spaces:} First, a total of $NS$ different \emph{virtual spaces} are created with presence patterns that are repeated over $FD$ time units and mobile entities being detected with the same detection frequency ($T_p=1$).
A virtual space is characterized by a tuple $\langle \set{P}, NP \rangle$ of 
presence patterns $\set{P}$ each having size $NP$.  Complying with the 
definition of presence patterns in Section~\ref{sec:methodology}.1, each 
presence pattern represents a group of mobile entities entering and leaving a 
space simultaneously.
We denote a pattern by the tuple $\langle \set{GS}, NG, t_{\ts{start}}, \tau 
\rangle $ where $\set{GS}$ is a set of mobile entity IDs of size $NG$.
Parameter $t_{\ts{start}}$ is the start time of the pattern, and $\tau$ is its duration.
We assume that each mobile entity generates a detection record at times 
$t_{\ts{start}} + k$ for $0 \leq k < \tau$. A virtual space thus represents an 
actual space, such as a coffee corner, a class room, and so on, for which we 
assume that a fingerprint is known.

\textbf{Generating instances of spaces:} From each virtual space, $NI$ number of instances are generated which will represent the presence patterns of the same space over multiple epochs of length $FD$ with a modified situation.
These instances are generated by varying different sensitivity test parameters as explained later.

\textbf{Generating the mobility dataset:}
Note that each pattern implicitly defines a set of detections.
Each mobile entity $\id{d} \in \set{GS}$ is assumed to generate detections at times $t_{\ts{start}}, t_{\ts{start}}+1, \ldots$.
As a consequence, $\langle \set{GS}, NG, t_{\ts{start}}, \tau \rangle$ for a space $\id{s}$ gives rise to a set of detections $\set{DT}(\id{s},\set{GS}) = \{\langle \id{d},\id{s},t_{\ts{start}}\! +\! k \rangle | \id{d}\! \in\! \set{GS}, 0\! \leq\! k\! <\! \tau\}$.
We construct a dataset by taking the union of sets $\set{\overline{DT}}(\id{s},\set{GS})$ for patterns generated for \id{s}.

\textbf{Clustering:} Each set of detections $\set{\overline{DT}(\id{s})}$ is vectorized using Algorithm~\ref{algorithm2} with a precomputed pair of $FD$ and $FR$ and the accuracy of clustering fingerprint instances to their correct cluster is presented.
For the input $K$ of the \textit{K-means} algorithm, we use the number of original fingerprints as the number of clusters.
The success of the algorithm in clustering is finding $NS$ distinct clusters by mapping the instances of the same space to the same cluster.

\subsubsection{Sensitivity test parameters}

Our goal is to test the clustering accuracy of the fingerprinting technique.
There are in general two groups of parameters that affect the quality of clustering.
The first group represents the inherent uncertainty present in presence patterns.
That is, in real-world settings it is unlikely that a presence pattern repeats itself exactly the same way.
The other group represents the noise introduced by data-collection instruments, such as, for example, missing detections due to collision.
Below we specify how we apply the effects of these parameters on the synthetic dataset.

\textbf{Mobility related sensitivity parameters}

\begin{itemize}

\item{\textbf{Variable start and duration}}: We modify the start and duration of each presence pattern by $t_{\ts{start}}^* \in N(t_{\ts{start}},\tau \alpha_{\ts{ts}})$ and $\tau^*\in N( \tau, \tau\alpha_{\ts{td}})$ such that $t_{\ts{start}}^*+\tau^* <FD$.
  $N(\mu,\sigma)$ represents a normal distribution with mean $\mu$ and standard deviation $\sigma$.

\item{\textbf{Variable group size}}: We modify the set of mobile entity IDs of each presence pattern to $\set{GS}^*$ with a new size $NG^* \in N( NG, NG\alpha_{\ts{gs}})$.

\item{\textbf{New random patterns}}: For each space, we generate $\beta NP$ number of new random patterns with the same procedure that we generated the presence patterns.

\item{\textbf{Removal of patterns}}: We randomly remove $\gamma NP$ number of patterns from the original patterns and create a mobility dataset.
	
\end{itemize}

\textbf{Instrument related sensitivity parameters}
\begin{itemize}

\item{\textbf{Asynchronous detection frequency}}: In reality, the frequency of detections is very much device dependent.
  In order to show the effect of asynchronous detections being sent by mobile entities, we randomly choose $\eta NG$ number of mobile entities from each presence pattern and change their detection period by assigning a random number in the range $[2,0.5\tau]$.

\item{\textbf{Missing detections}}: After creating the detection dataset 
$\set{\overline{DT}}{(\id{s})}$, we randomly remove $\rho$ percent of mobile 
entity IDs for each moment in $\set{\overline{DT}}(\id{s})$.
 (Recall that detections occur at discrete moments in time.)

\end{itemize}
 
 \begin{figure}
 	\centering
 	\begin{subfigure}{0.23\textwidth}
 		\includegraphics[width=\textwidth]{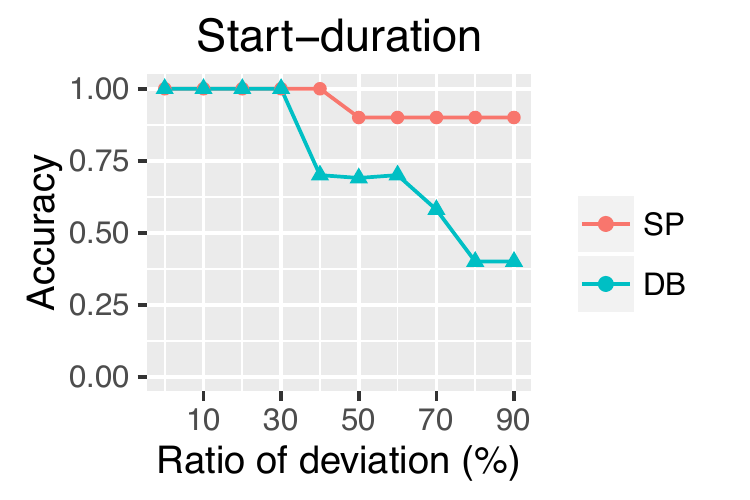}
 		\caption{}
 	\end{subfigure}
 	\begin{subfigure}{0.23\textwidth}
 		\includegraphics[width=\textwidth]{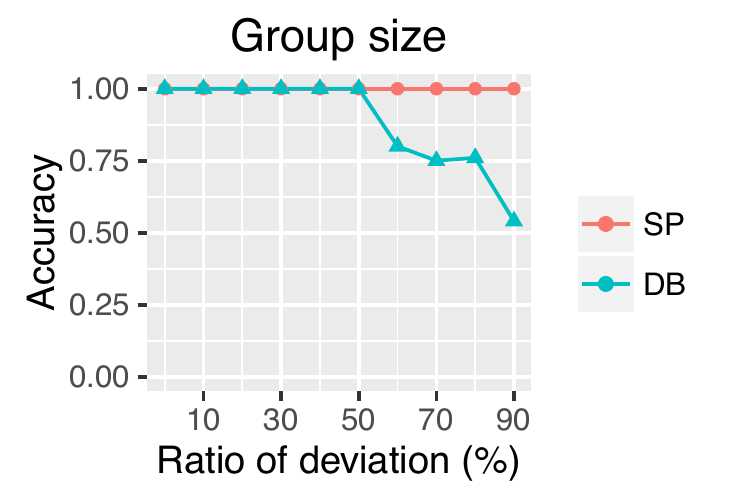}
 		\caption{}
 	\end{subfigure}
 	
 	\begin{subfigure}{0.23\textwidth}
 		\includegraphics[width=\textwidth]{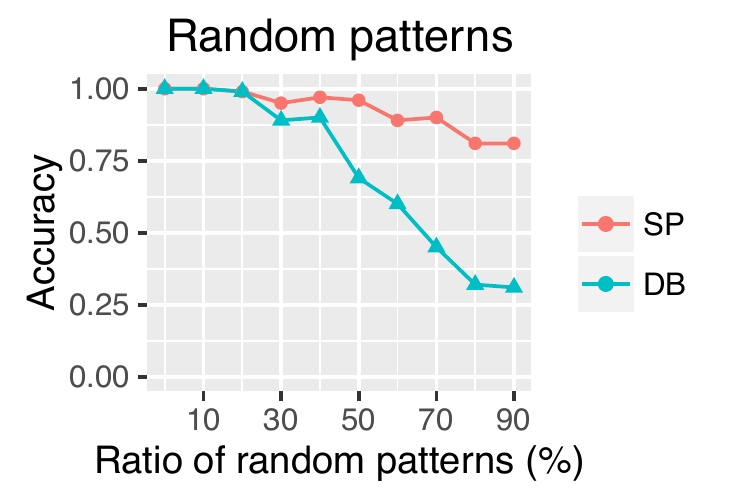}
 		\caption{}
 	\end{subfigure}	
 	\begin{subfigure}{0.23\textwidth}
 		\includegraphics[width=\textwidth]{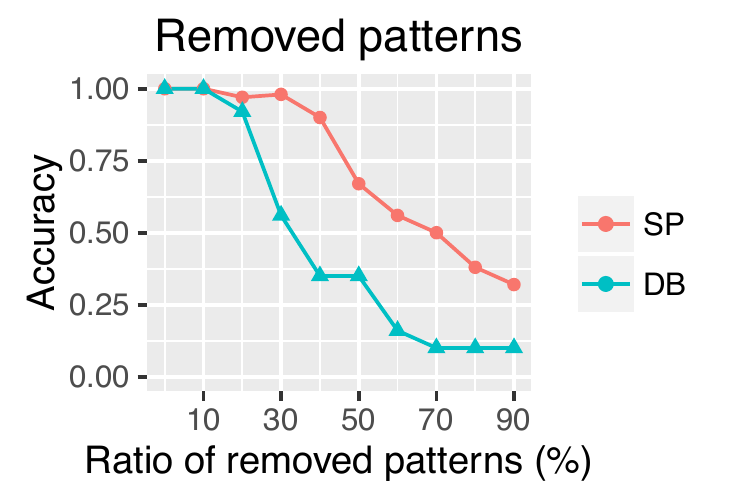}
 		\caption{}
 	\end{subfigure}
 	
 	\begin{subfigure}{0.23\textwidth}
 		\includegraphics[width=\textwidth]{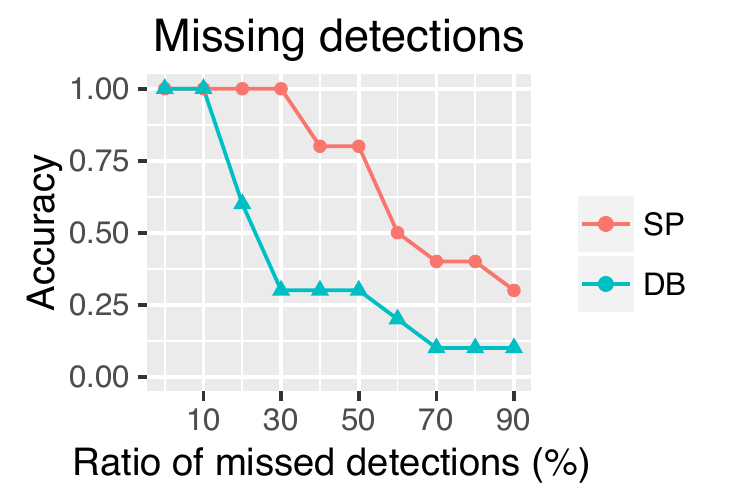}
 		\caption{}
 	\end{subfigure}
 	\begin{subfigure}{0.23\textwidth}
 		\includegraphics[width=\textwidth]{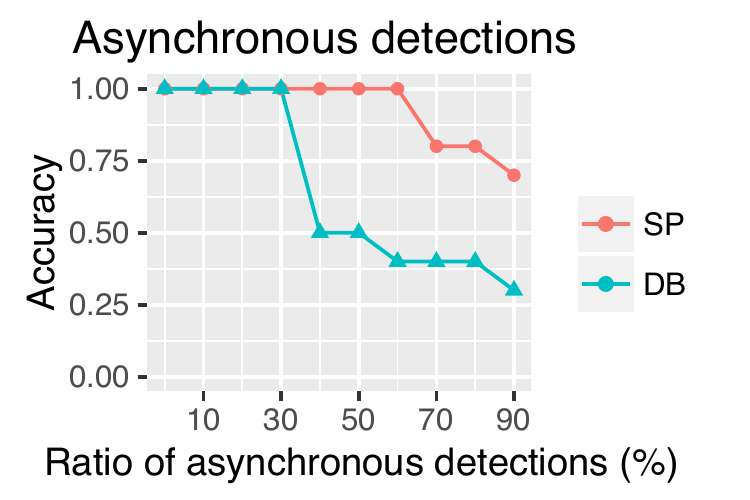}
 		\caption{}
 	\end{subfigure}
 	\caption{Tests with synthetic dataset. $SP$ and $DB$ denote use 
 		of feature 
 		vectors extracted based on \spaceprint, and 
 		\emph{density-based} approach, respectively. }
 	\label{syntheticgraphs}
 \end{figure}

\begin{table}[!ht]
  \caption{\label{tbl:synthetictable} The parameters chosen for generating a 
  synthetic dataset}
  \begin{center}
	\begin{tabular}{cl}\hline
	  \textbf{Parameter} & \textbf{Value} \\ \hline
	  $NS$       &  10 \\
	  $NI$       &  100 \\
	  $FD$   &  1440 \\
	  $FR$   &  60 \\
	  $NG, NP$       &  $ \in [1,100],  \in \mathbb{N}$ \\
	  $t_{\ts{start}}$ &  $\in [1,1440], \in \mathbb{N}$ \\
	  $\tau$     &  $  \in [1,1440-t_{\ts{start}}], \in \mathbb{N}$ \\
	  $\alpha_{\ts{gs}},\alpha_{\ts{ts}},\alpha_{\ts{td}}, 
	  \beta,\gamma,\eta,\rho$ & $\in 
	  [0,0.9], \in \mathbb{Q}$ \\ \hline
	\end{tabular}
  \end{center}
\end{table}

Table~\ref{tbl:synthetictable} represents the parameter ranges used for the 
tests in this section.
The results of analysis with the synthetic dataset are shown in Figure~\ref{syntheticgraphs}.
We use detections from a total of 10 different clusters.  In each figure, we show the accuracy of assigning instances to the correct original cluster while varying a specific sensitivity test parameter.
We note that with 10 clusters, simply assigning all instances to one cluster will lead to 10\% accuracy.
Therefore, an accuracy less than 10\% is meaningless.
In order to have a feeling of how good the accuracy of \spaceprint\ is, we compare it with a \emph{density-based} approach.
We extracted the feature vectors for \spaceprint\ using Algorithm~\ref{algorithm2} and an equivalent feature vector for the \emph{density-based} approach with the fingerprint parameters ($FD= 1440, FR=60$).
The features extracted using these two methods are alternatively used as input to \emph{K-means}.
In the case of \spaceprint, the distance metric introduced in Definition~\ref{def:distance} is used.
For the density-based alternative we use the Euclidean distance.

The graphs presented in Figure~\ref{syntheticgraphs} suggest that using the 
feature vectors extracted by \spaceprint\ results in a higher accuracy than 
using \densitybased\ feature vectors.
Figures~\ref{syntheticgraphs}(a) and (b) show that the accuracy of \spaceprint\ is hardly affected by the changes in start, duration, and group size of random patterns.
It is also seen in Figure~\ref{syntheticgraphs}(c) that introducing new random patterns will not degrade the accuracy of \spaceprint\ as the fixed underlying patterns are being reflected in various elements of the feature vector.
By removing patterns that construct the original space from a generated instance of that space, the accuracy of \spaceprint\ degrades.
However, \spaceprint\ is still much more robust in response to such changes than the \densitybased\ approach (Figure~\ref{syntheticgraphs}(d)).
We see in Figures~\ref{syntheticgraphs}(e) and (f) that \spaceprint\ is also 
more robust to the noise introduced by instrument-related parameters than the 
density-based approach.
Although missing detections and variable frequency of detections will distort parts of the feature vector representing presence patterns with a finer period, the effect of patterns will still be present in elements which represent coarser sampling periods.

\subsection{Real datasets}

In this section, we apply our fingerprinting framework on two datasets 
collected from real-world public spaces. Both of these datasets 
conform to our model in Section \ref{sec:system-model}. 
However, due to having different data collection mechanisms, 
they 
have subtle differences in terms of sparsity of detections and variety of 
spaces (summarized in Table \ref{tbl:datasets}).
The first dataset is a set of WiFi detections very rich in terms of the number 
of detections collected per space but contains data from a limited number of 
spaces. This dataset is collected by WiFi scanners placed in all \textbf{coffee 
corners} at our university campus for a period of 5 months\footnote{Anonymous 
WiFi scanning can be performed by hashing MAC addresses on the fly and 
providing an opt-out option for visitors.}.
The second one, which is a dataset of Foursquare \cite{yang2013fine} check-ins, 
is very rich in terms of diversity of spaces while being much sparser in terms 
of the number of detections available per location. We chose locations within 
the top 100 location categories with data from more than 531 days.

\begin{table}[!ht]
	\caption{\label{tbl:datasets} Datasets}
	\begin{center}
		\begin{tabular}{ccl}\hline
		&\textbf{WiFi DB}& \textbf{Foursquare DB}\\ 
			\hline
			\textbf{\#Spaces} &8 & 10,000\\
			\textbf{ 
				\#Categories}& 1 & 100 \\
			\textbf{\#Mobile entities} &700,000  & 201,132 
			\\
			\textbf{Duration (days)} & 150  & 531 \\
			\textbf{\#Detections} & 95,000,000  & 24,474,738\\ 
			\textbf{\#Detections per space per day} & \textbf{79,166} & 
			\textbf{2.3}
			\\\hline
		\end{tabular}
	\end{center}
\end{table}

\subsection {Case study with WiFi dataset}

In what follows, we demonstrate the procedure of extracting fingerprinting parameters and feature vectors using the WiFi dataset.

\subsubsection{Extracting fingerprint parameters} 

In order to calculate the feature vectors, it is required that the optimal 
fingerprinting parameters, $FD$ and $FR$, are extracted for each space 
separately.
We show how we find these values for one of the coffee corners using 
Algorithm~\ref{algorithm3}.
Figures~\ref{parameters}(a) and (b) illustrate how the optimal fingerprint duration can be extracted.
What is shown in these graphs is the average pairwise distance of feature vectors calculated by varying the parameter, $FD$.
It should be noted that the comparison of fingerprint durations is only fair if it is performed based on the pairwise distance of vectors of the same length (vectors of longer length will have more elements equal to zero and therefore, their distance will be smaller).
To have feature vectors of the same length, we changed the fingerprinting resolution, $FR$, based on the fingerprint duration such that the size of the resulting feature vector stays constant.
This is achieved by setting $\frac{FD}{FR}$ to a constant value.
We calculated these distances for vectors of length 17186 and 791 features, respectively.
The optimal fingerprint duration is the one that \emph{minimizes} the distance between two feature vectors, and thus maximizing similarity.
For both graphs shown in Figure~\ref{parameters}(a) and (b), this value is acquired at a duration equivalent to one week (168 hours).
In Figure~\ref{parameters}(b), it is also seen that a fingerprint duration equivalent to 3 days is the worst fingerprinting choice, leading to maximum dissimilarity between vectors.

\begin{figure}
	\centering
	\begin{subfigure}{0.23\textwidth}
		\includegraphics[width=\textwidth]{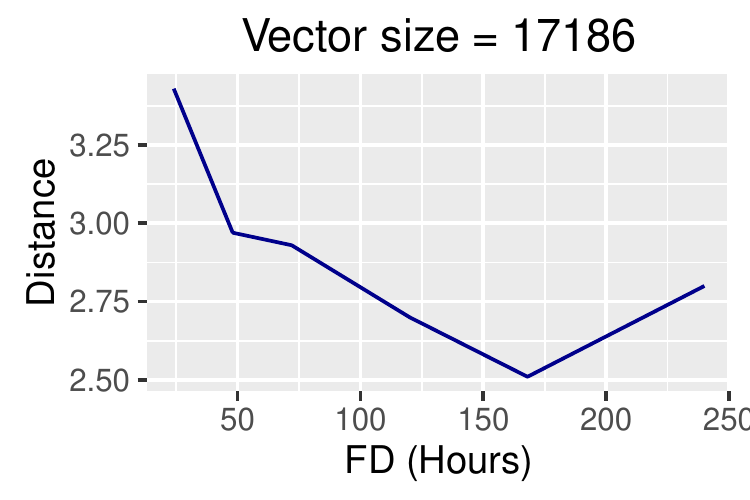}
		\caption{}
	\end{subfigure}
	\begin{subfigure}{0.23\textwidth}
		\includegraphics[width=\textwidth]{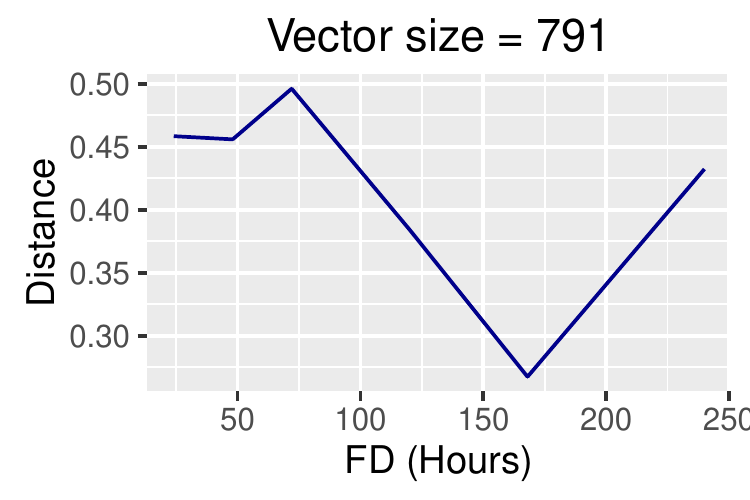}	
		\caption{}
	\end{subfigure}
	\begin{subfigure}{0.23\textwidth}
		\includegraphics[width=\textwidth]{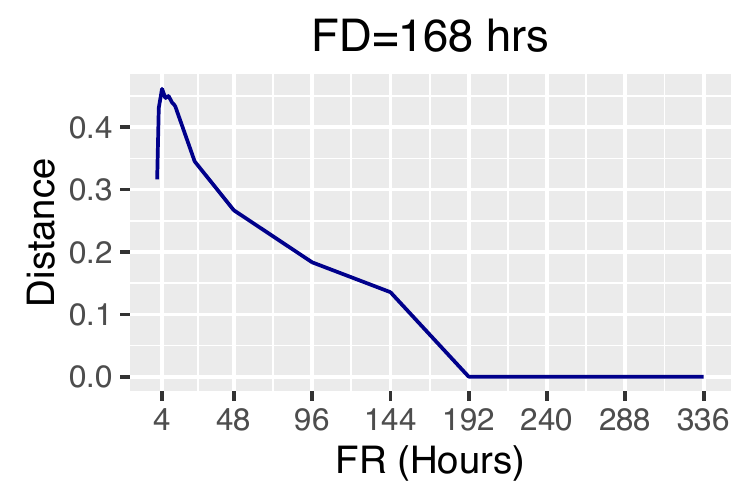}
		\caption{}
	\end{subfigure}
	\begin{subfigure}{0.23\textwidth}
		\includegraphics[width=\textwidth]{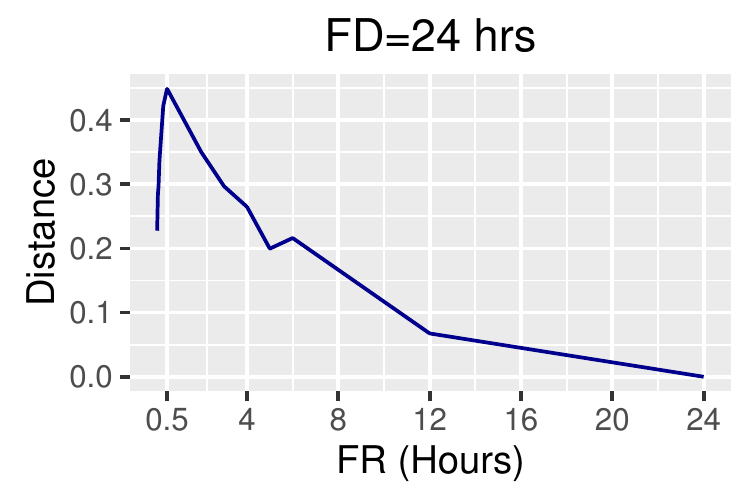}
		\caption{}
	\end{subfigure}
	\caption{(a-b) Choosing optimal fingerprinting duration (c-d) choosing 
	optimal fingerprinting resolution}
	\label{parameters}
\end{figure}
Figures~\ref{parameters}(c) and (d) show how the optimal fingerprint resolution can be chosen.
The optimal fingerprint resolution is the one that \emph{maximizes} the distance between feature vectors revealing more detail about the space.
We have looked at the optimal resolution when the fingerprint duration is equal to the optimal fingerprint duration (1 week time).
The results suggest that a resolution of 4 hours can still reveal the differences between feature vectors.
As most of the weekdays are similar, we also looked at the spaces (only over weekdays) with a fingerprinting duration of 24 hours.
The figures suggest that a resolution of 30 minutes suffices to reveal the necessary level of detail when the fingerprint is only extracted from weekdays.
This is in fact the minimal resolution that still captures detections from static devices.
Any finer resolution will result only in more zero-valued entries in feature vectors.
Note that deriving two optimal resolutions does not contradict Lemma~\ref{lemma:fr}, as the daily resolution is extracted only from weekdays.

\subsubsection{Two-dimensional representation of feature vectors:}
To further see how \emph{Spaceprint} represents the similarities between the situation in these coffee corners, we also visualize the extracted feature vectors from the whole dataset in Figures~\ref{resolutiongraphweek} and \ref{resolutiongraphday}.
The feature vectors extracted have $n$ elements (e.g., with $FD=168$ hours and $FR=1$ hour, $n = 23355$) and can be represented as points in an $n$-dimensional coordinate system.
In order to represent such points, we map them to a two-dimensional space using multi-dimensional scaling~\cite{borg2005modern}.
%
This method takes a dissimilarity matrix composed of the pair-wise distance between all vectors.
By applying principal component analysis on such a matrix a coordinate matrix is generated whose configuration minimizes a loss function.
Using the dissimilarity matrix calculated based on the distance function from Definition~\ref{def:distance}, multi-dimensional scaling can capture the effects of the nonuniform size of the elements in our feature vectors.

\begin{figure}
	\centering
	\begin{subfigure}[b]{0.23\textwidth}
		\includegraphics[width=\textwidth]{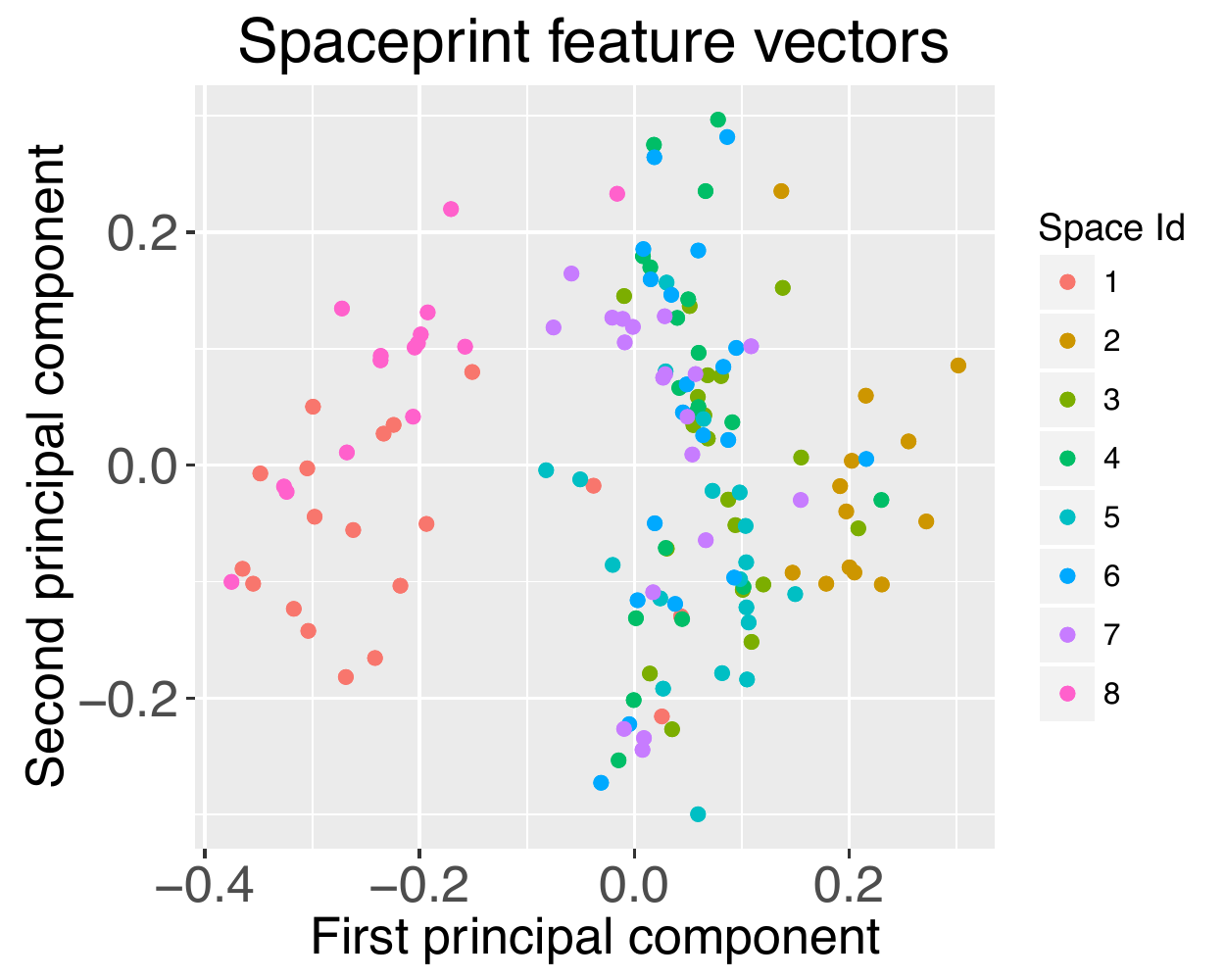}
	\end{subfigure} 
	\begin{subfigure}[b]{0.23\textwidth}
		\includegraphics[width=\textwidth]{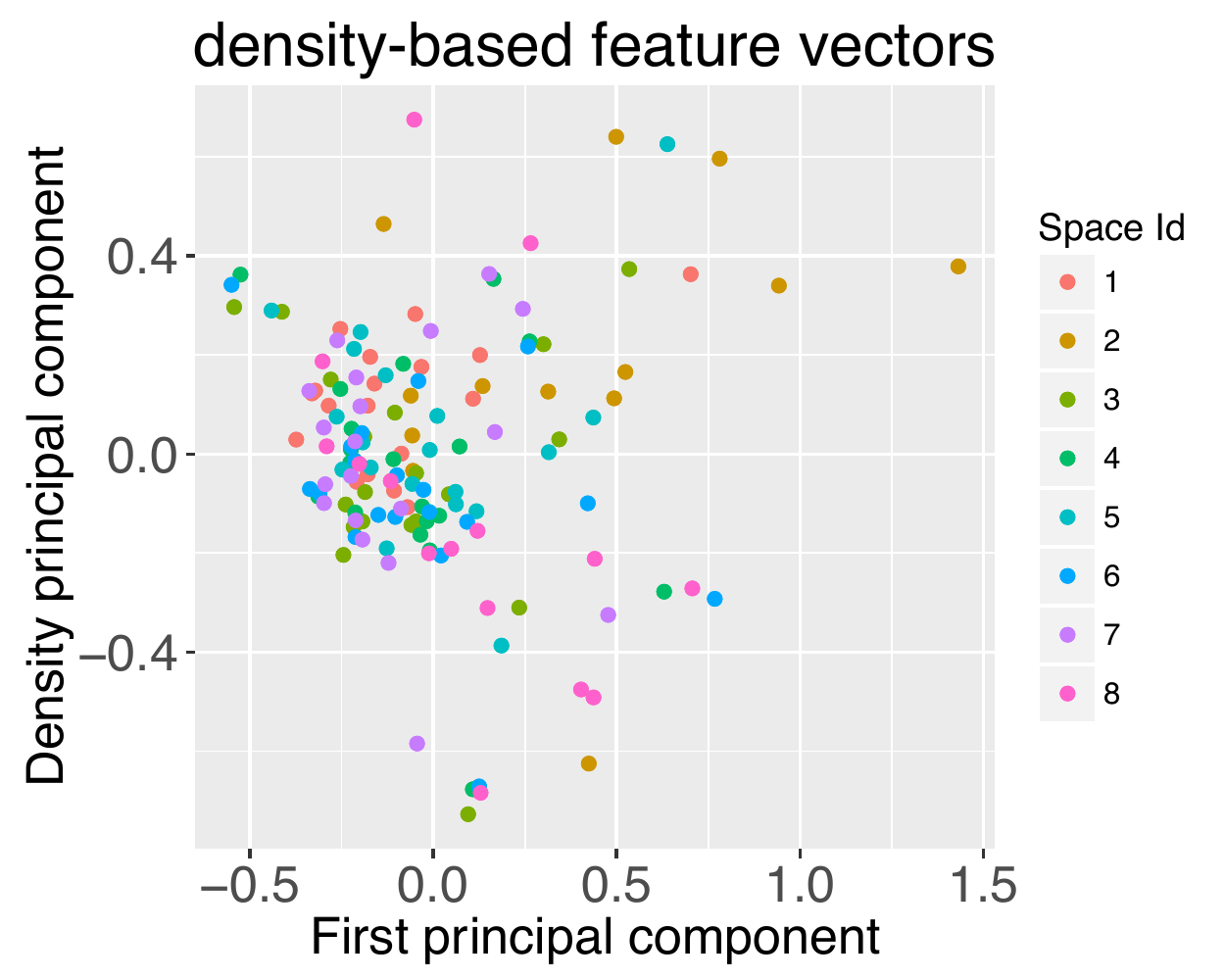}
		
	\end{subfigure}
	\caption{Two-dimensional representation of feature vectors 
	of 
	\textit{Spaceprint} and 
	\textit{density-based}
	approach. Each 
		point represents one week of data. $FD=168$ hours and $FR=1$ 
		hour.}
	\label{resolutiongraphweek}
\end{figure}

\begin{figure}
	\centering
	\begin{subfigure}[b]{0.23\textwidth}
		
		\includegraphics[width=\textwidth]{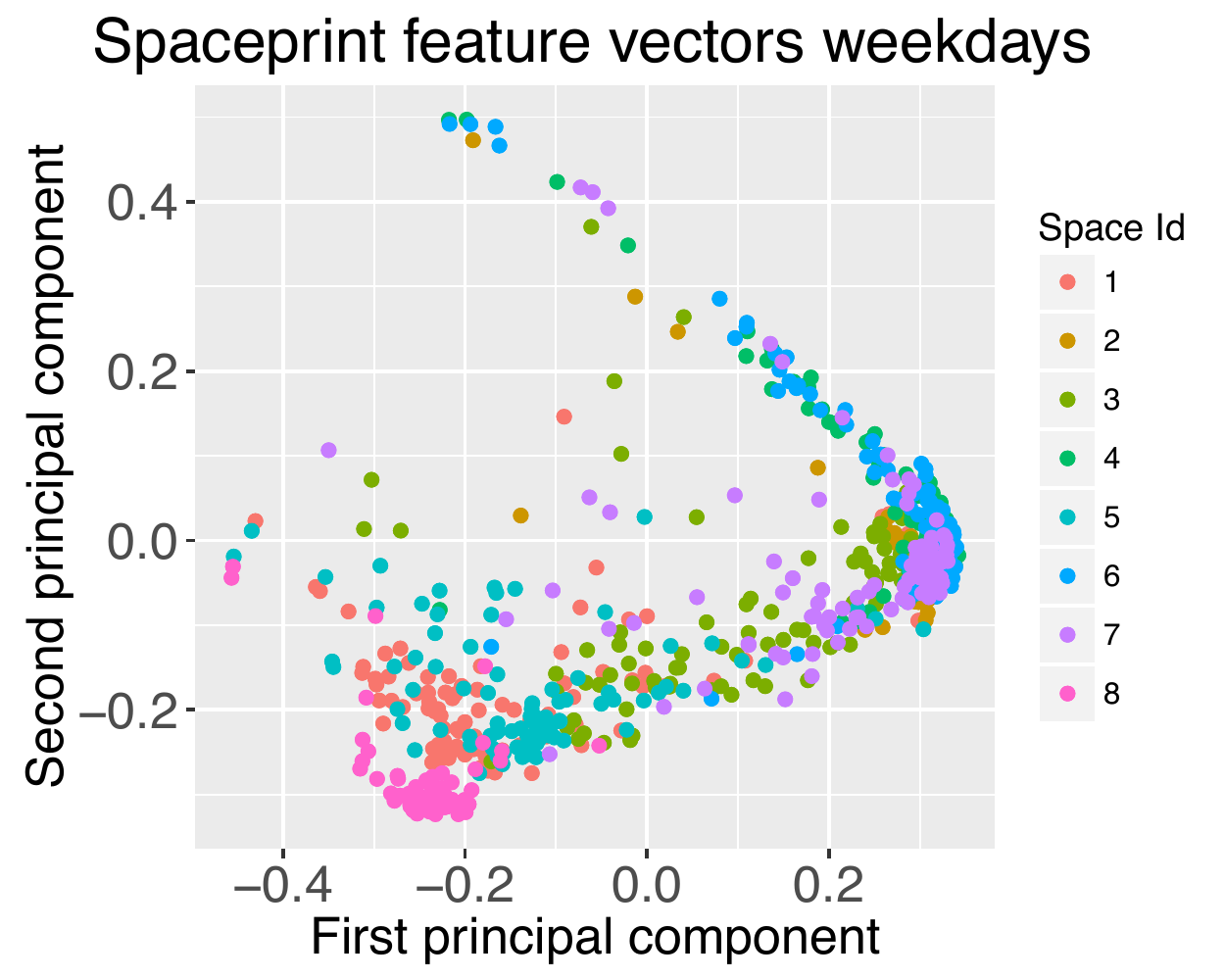}
		\caption{}	
	\end{subfigure} 
	\begin{subfigure}[b]{0.23\textwidth}
		\includegraphics[width=\textwidth]{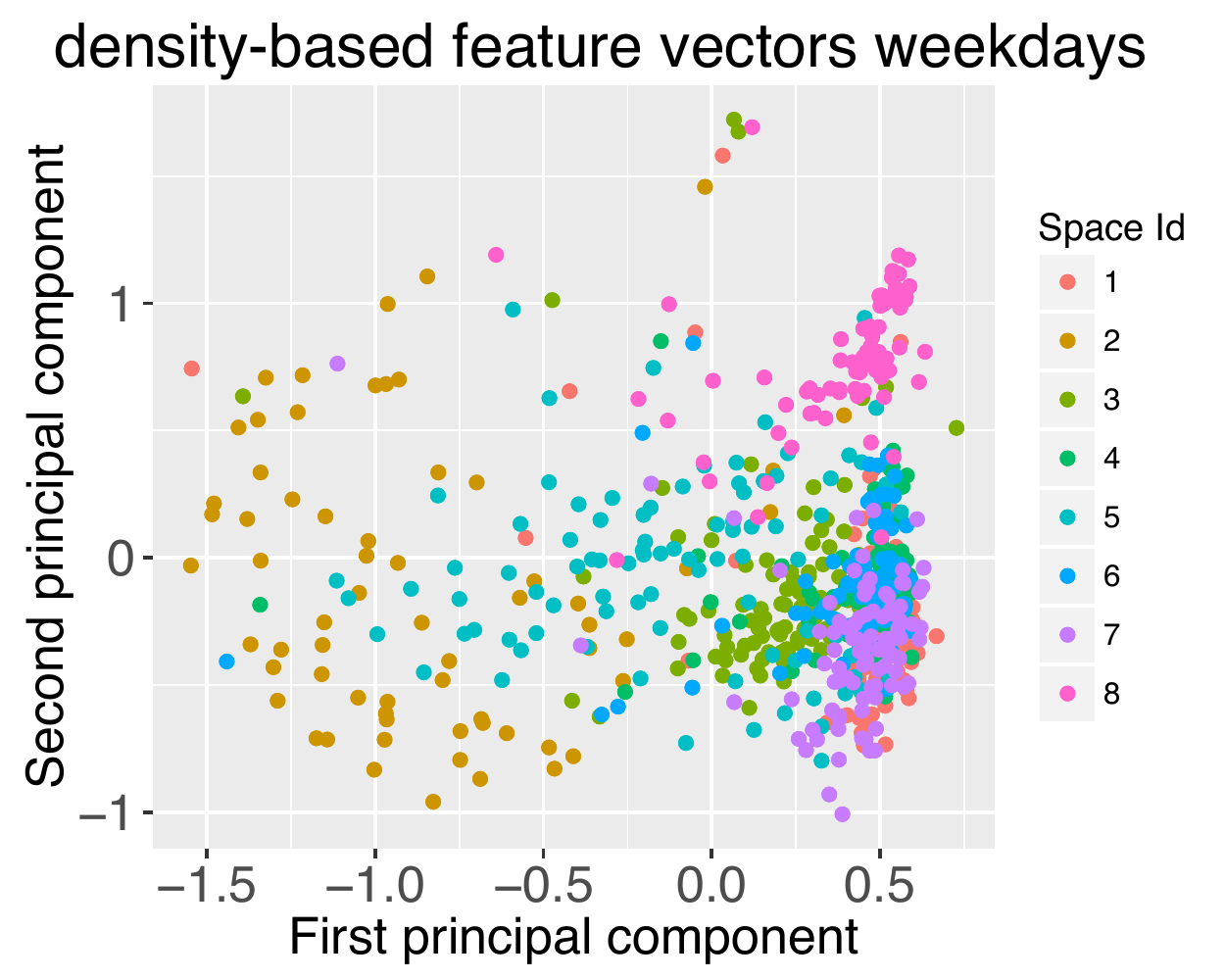}
		\caption{}		
	\end{subfigure}
	\begin{subfigure}[b]{0.23\textwidth}
		\includegraphics[width=\textwidth]{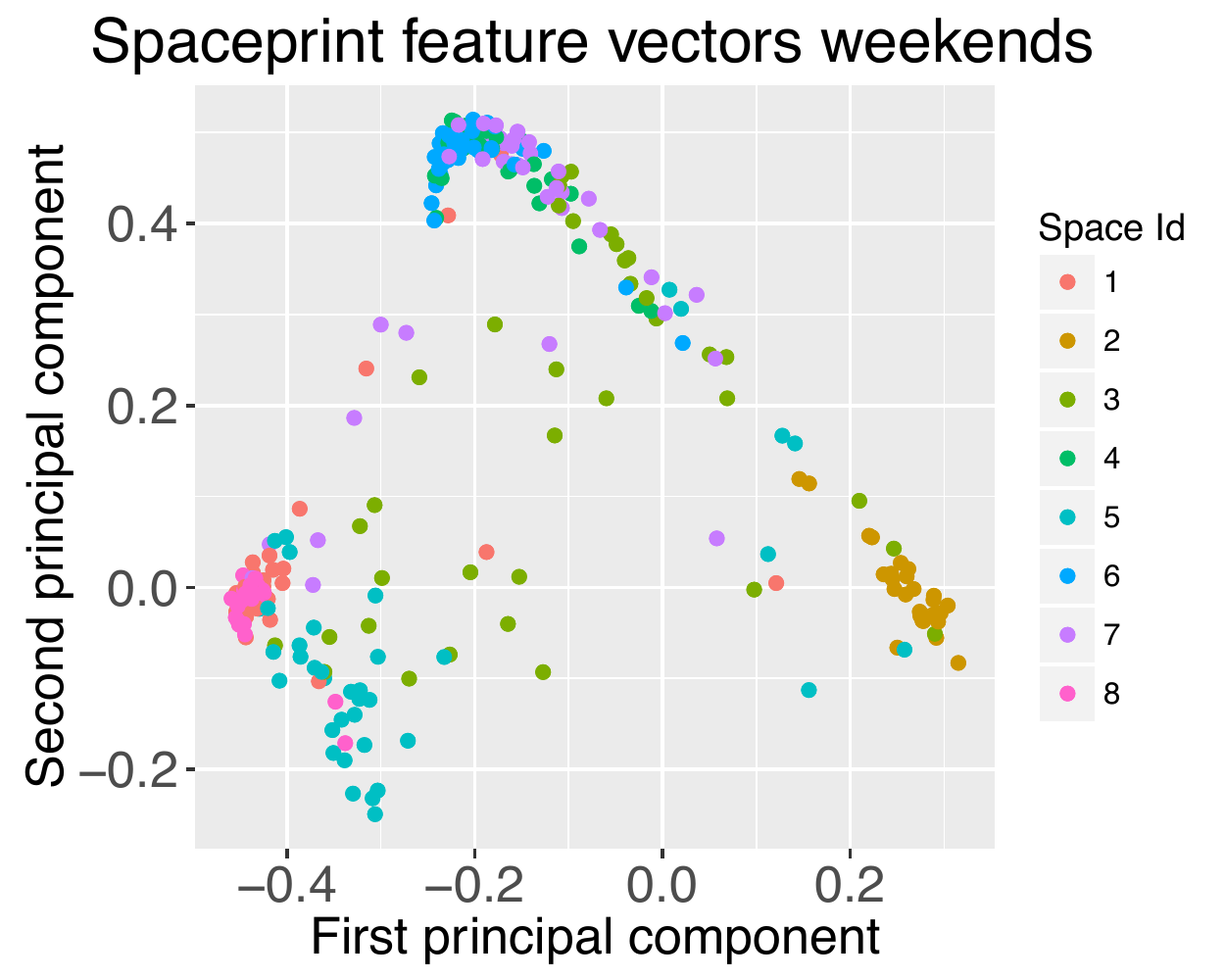}	
		\caption{}	
	\end{subfigure} 
	\begin{subfigure}[b]{0.23\textwidth}
		\includegraphics[width=\textwidth]{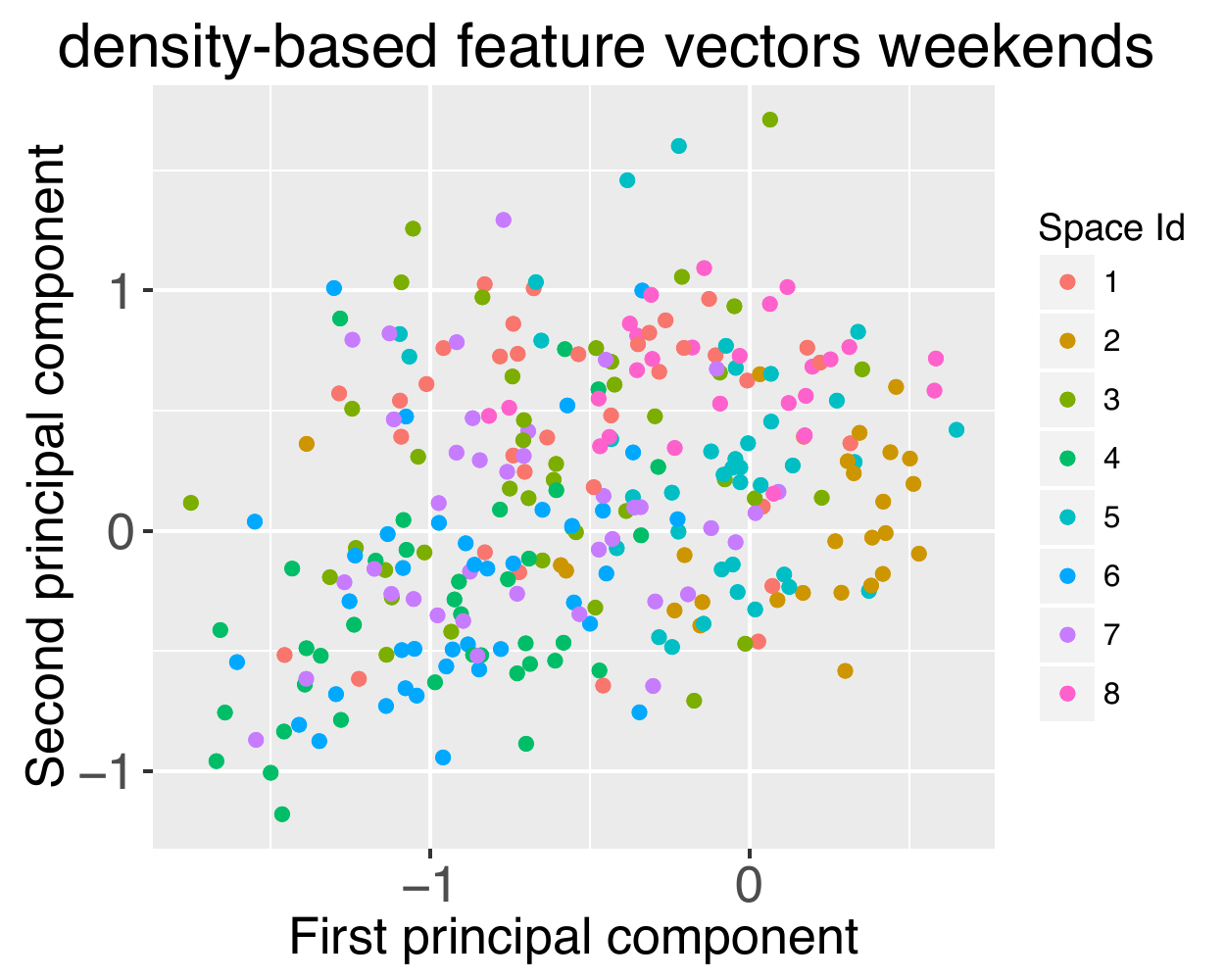}
		\caption{}	
	\end{subfigure}
	
	\caption{Two-dimensional representation of feature vectors 
	of 
		\textit{Spaceprint} and \textit{density-based} 
		approach. Each 
		point represents one day of data. $FD=24$ hours and $FR=1$ 
		hour.}
	\label{resolutiongraphday}
\end{figure}
The results are presented in Figures~\ref{resolutiongraphweek} and \ref{resolutiongraphday}.
We compare the result of vectorizing using \spaceprint\ and the \emph{density-based} approach.
In Figure~\ref{resolutiongraphweek}, we have vectorized each week of data ($FD=168$ hours and $FR=1$ hour).
As seen, \spaceprint\ results in a clearer distinction between points of the same color.
In other words, the identity of the location is reflected in the similarity 
between weeks of data from the same space. In Figure~\ref{resolutiongraphday}, 
using the parameters $FD=24$ hours and $FR=1$ hour, each day is vectorized 
separately.  We also present the weekdays and weekends in separate graphs.
Again, \spaceprint\ provides a better distinction between the situation in spaces by placing points representing days in different spaces further from each other.
This is specifically visible in the case of weekends (Figure 
\ref{resolutiongraphday}(c)-(d)).
The data presented here includes occasional changes in normal presence patterns, due to holidays, special events such as conferences, etc.
Therefore, there are naturally outliers, yet the identity of locations is evident.

\subsubsection{Clustering performance (Latent variable of identity)}
To further evaluate how such feature vectors can be used to create a unique 
fingerprint for spaces, we cluster them using \emph{K-means} algorithm. The 
goal is to see if we can distinguish from \emph{which} space they have been 
extracted. Each 
space in this dataset has a \textbf{space id}. We cluster feature vectors 
extracted from 150 days and look for 8 different clusters representing 8 
different space ids.
This is equivalent of assigning points of the same color (in Figure 
\ref{resolutiongraphday}) to the same cluster.
Performance of the clustering task in terms of Accuracy, Random Index, 
F-measure, and Normalized Mutual Information (NMI) is presented in Figure 
\ref{coffeecorneraccuracy}.
As seen, the results are in favor of \emph{Spaceprint} for all of these indicators.

\begin{figure}
	\centering
	\begin{subfigure}{0.4\textwidth}
		\includegraphics[width=\textwidth]{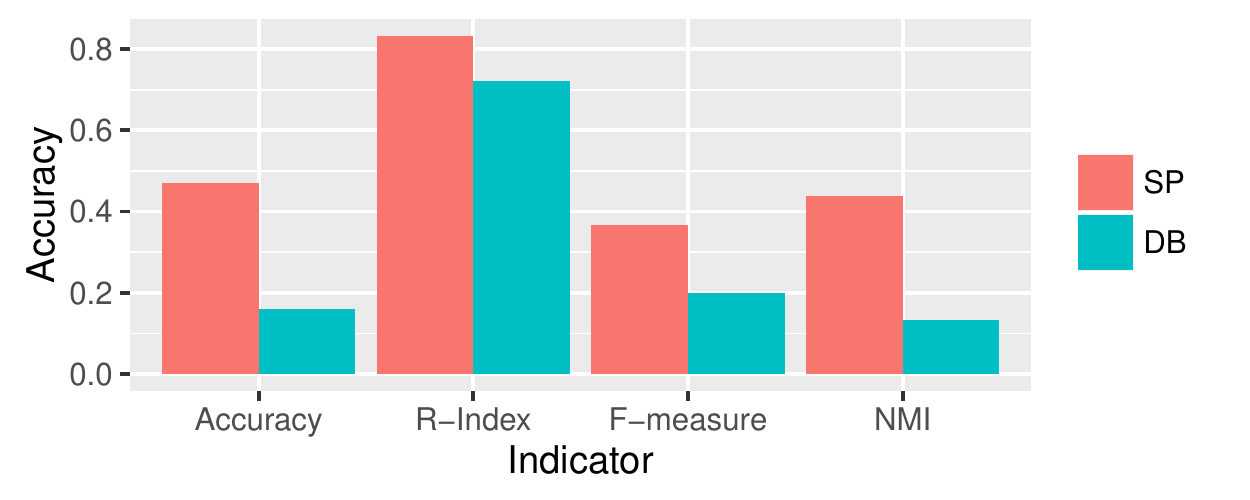}
	\end{subfigure}
	\caption{Performance of clustering for $FD=168$ hours. $SP$ and 
	$DB$ denote use of \textit{Spaceprint} and \textit{density-based} features.}
	\label{coffeecorneraccuracy}
\end{figure}
\subsection{Case study with the Foursquare dataset}

In this section, we perform evaluations on a dataset collected from Foursquare location-based social network.
Each space in this dataset has a \textbf{space id} and a \textbf{space 
category}.
Taking each of these two labels as ground truth for determining the clustering 
performance, gives us the opportunity to perform two types of evaluations.
The first one, similar to evaluations on the WiFi dataset, is to classify 
feature vectors to know from \textit{which} space they were extracted.
The second one, is to classify feature vectors of a group of spaces to know 
from \textit{what type} of space they were collected.
Performance is evaluated based on classification of spaces with category labels 
such as \textit{home}, \textit{office}, \textit{airport}, \textit{restaurant}, 
\textit{Chinese restaurant}, \textit{road}, etc.
(Full list is omitted due to lack of space).

\subsubsection{Clustering performance (Latent variable of identity)}
For the first experiment, performance of \emph{Spaceprint} and 
\emph{density-based} method ($SP_i$ and $DB_i$) is compared based on 
classification between $K$ randomly chosen space ids $(K \in [2,10])$ and 
feature vectors extracted from 531 days.
The accuracy of clustering algorithm is calculated on correctly clustering feature vectors of different spaces based on their original space id.

\begin{figure}
	\centering
	\begin{subfigure}{0.23\textwidth}
		\includegraphics[width=\textwidth]{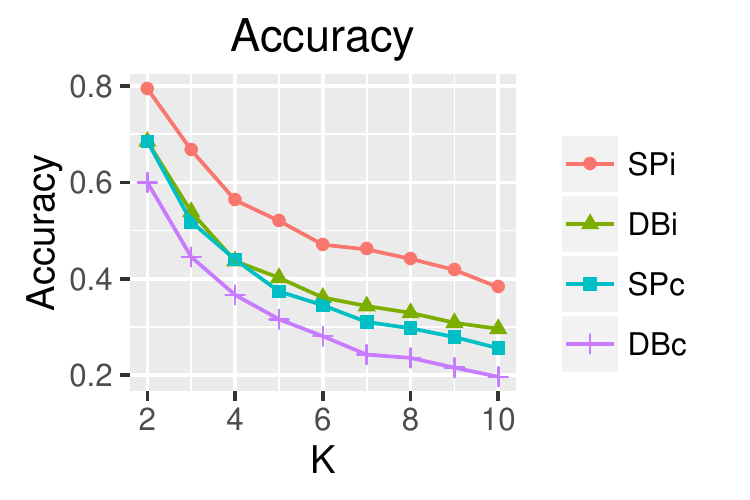}
		\caption{}
	\end{subfigure}
	\begin{subfigure}{0.23\textwidth}
		\includegraphics[width=\textwidth]{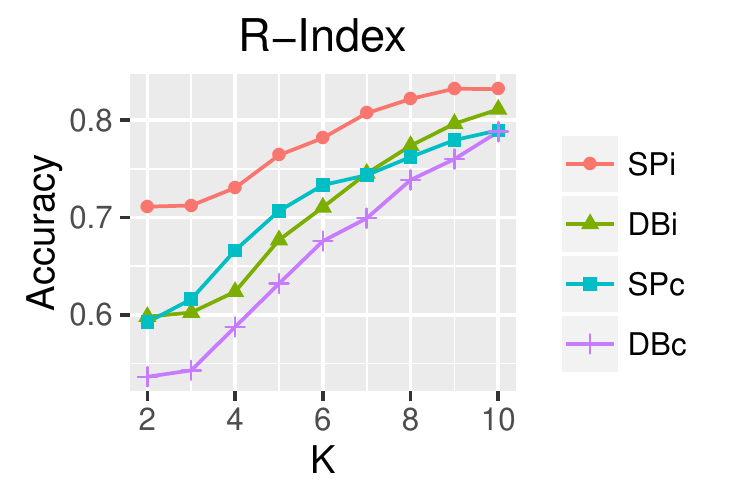}
		\caption{}
	\end{subfigure}
	
	\begin{subfigure}{0.23\textwidth}
		\includegraphics[width=\textwidth]{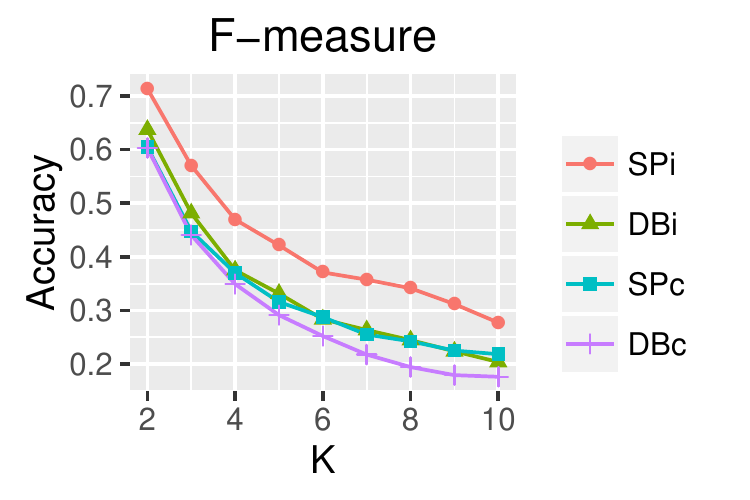}
		\caption{}
	\end{subfigure}
	\begin{subfigure}{0.23\textwidth}
		\includegraphics[width=\textwidth]{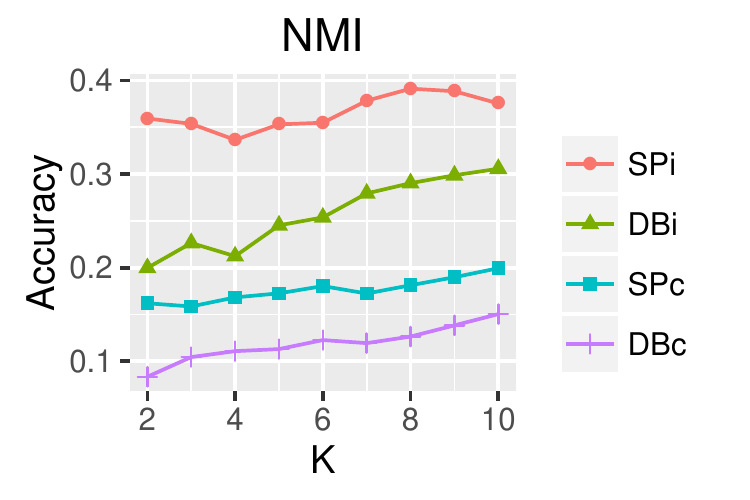}
		\caption{}
	\end{subfigure}
	\caption{Tests with Foursquare dataset. $SP$ and $DB$ denote use 
		of feature 
		vectors extracted based on \spaceprint, and 
		\emph{density-based} approach, respectively. 
		Subscripts "i" and "c" refer to classification 
		based on the latent variable of identity and 
		category, respectively. $K$ is the number of clusters.}
	\label{foursquare}
\end{figure}

\subsubsection{Clustering performance (Latent variable of category)}
For the second experiment, we chose $K$ randomly chosen categories $(K \in 
[2,10])$ and further selected 10 spaces per category.
We similarly extracted the feature vectors from 531 days.
The accuracy of \emph{Spaceprint} and \emph{density-based} method ($SP_c$ and 
$DB_c$), is compared based on correctly clustering the feature vectors of a 
group of spaces based on their correct category.
The results presented in Figure \ref{foursquare}, are the mean value acquired 
from 100 runs of experiment per $K$ with $FD= 168$ and $FR=1$ hour.
Generally, regardless of the high sparsity level of this dataset, comparisons 
shown in Figure \ref{foursquare} (a)-(d) are in favor of \emph{Spaceprint} for 
both experiments.
Higher performance in terms of NMI shows that even misclassification of spaces 
based on category yields more information about the similarity of spaces in 
different clusters.
An example will be misclassifying a space with the category label of 
\emph{restaurant} to the category of \emph{Chinese restaurant}.

\section{Discussion and conclusions}
\label{sec:conclusion}

In this paper, we presented \textit{Spaceprint}, a technique for creating 
spatial fingerprints for repetitive 
situations in public spaces.
What makes \spaceprint\ unique is its fully 
automatic operation with minimal input from 
anyone who 
operates it.
Our evaluations show that the automated 
fingerprinting of spaces is indeed 
possible, 
opening the path to more sophisticated 
approaches for automated 
situation-awareness. 
We also conclude that \spaceprint\ is relatively insensitive to parameters that 
can degrade the classification accuracy. By 
automatically extracting 
fingerprint 
parameters, \textit{Spaceprint} allows embedding privacy by design in data 
collection by 
anonymizing 
(e.g. 
 hashing) data with timely hashes based on fingerprint duration parameter such 
that 
the accuracy of the spatial fingerprint is 
also not affected.
In this paper, we looked at the possibility 
of 
fingerprinting repetitive situations in a 
single space. 
Our future work 
entails refining this method to consider 
interaction between multiple spaces in 
creating these fingerprints.

\bibliographystyle{ACM-Reference-Format}
\bibliography{bibliography} 


\begin{thebibliography}{00}


\ifx \showCODEN    \undefined \def \showCODEN     #1{\unskip}     \fi
\ifx \showDOI      \undefined \def \showDOI       #1{{\tt DOI:}\penalty0{#1}\ }
  \fi
\ifx \showISBNx    \undefined \def \showISBNx     #1{\unskip}     \fi
\ifx \showISBNxiii \undefined \def \showISBNxiii  #1{\unskip}     \fi
\ifx \showISSN     \undefined \def \showISSN      #1{\unskip}     \fi
\ifx \showLCCN     \undefined \def \showLCCN      #1{\unskip}     \fi
\ifx \shownote     \undefined \def \shownote      #1{#1}          \fi
\ifx \showarticletitle \undefined \def \showarticletitle #1{#1}   \fi
\ifx \showURL      \undefined \def \showURL       #1{#1}          \fi
\providecommand\bibfield[2]{#2}
\providecommand\bibinfo[2]{#2}
\providecommand\natexlab[1]{#1}
\providecommand\showeprint[2][]{arXiv:#2}

\bibitem[\protect\citeauthoryear{Baratchi, Meratnia, Havinga, Skidmore, and
  Toxopeous}{Baratchi et~al\mbox{.}}{2013}]%
        {BaratchiMeratniaHavingaEtAl2013}
\bibfield{author}{\bibinfo{person}{M. Baratchi}, \bibinfo{person}{N. Meratnia},
  \bibinfo{person}{P.~J.~M. Havinga}, \bibinfo{person}{A. Skidmore}, {and}
  \bibinfo{person}{A.G. Toxopeous}.} \bibinfo{year}{2013}\natexlab{}.
\newblock \showarticletitle{Sensing solutions for collecting spatio-temporal
  data for wildlife monitoring applications: a review}.
\newblock \bibinfo{journal}{{\em Sensors\/}} \bibinfo{volume}{13},
  \bibinfo{number}{5} (\bibinfo{year}{2013}), \bibinfo{pages}{6054--6088}.
\newblock


\bibitem[\protect\citeauthoryear{Borg and Groenen}{Borg and Groenen}{2005}]%
        {borg2005modern}
\bibfield{author}{\bibinfo{person}{I. Borg} {and} \bibinfo{person}{P.~J.~F.
  Groenen}.} \bibinfo{year}{2005}\natexlab{}.
\newblock \bibinfo{booktitle}{{\em Modern multidimensional scaling: Theory and
  applications}}.
\newblock \bibinfo{publisher}{Springer Science \& Business Media}.
\newblock


\bibitem[\protect\citeauthoryear{Do and Gatica-Perez}{Do and
  Gatica-Perez}{2014}]%
        {6427751}
\bibfield{author}{\bibinfo{person}{T.~M.~T. Do} {and} \bibinfo{person}{D.
  Gatica-Perez}.} \bibinfo{year}{2014}\natexlab{}.
\newblock \showarticletitle{The Places of Our Lives: Visiting Patterns and
  Automatic Labeling from Longitudinal Smartphone Data}.
\newblock \bibinfo{journal}{{\em IEEE Transactions on Mobile Computing\/}}
  \bibinfo{volume}{13}, \bibinfo{number}{3} (\bibinfo{year}{2014}),
  \bibinfo{pages}{638--648}.
\newblock


\bibitem[\protect\citeauthoryear{Eagle and Pentland}{Eagle and
  Pentland}{2009}]%
        {Eagle2009}
\bibfield{author}{\bibinfo{person}{N. Eagle} {and} \bibinfo{person}{A.~S.
  Pentland}.} \bibinfo{year}{2009}\natexlab{}.
\newblock \showarticletitle{Eigenbehaviors: identifying structure in routine}.
\newblock \bibinfo{journal}{{\em Behavioral Ecology and Sociobiology\/}}
  \bibinfo{volume}{63}, \bibinfo{number}{7} (\bibinfo{year}{2009}),
  \bibinfo{pages}{1057--1066}.
\newblock


\bibitem[\protect\citeauthoryear{Emran and Ye}{Emran and Ye}{2002}]%
        {emran2001robustness}
\bibfield{author}{\bibinfo{person}{S.~M. Emran} {and} \bibinfo{person}{N. Ye}.}
  \bibinfo{year}{2002}\natexlab{}.
\newblock \showarticletitle{Robustness of Chi-square and Canberra distance
  metrics for computer intrusion detection}.
\newblock \bibinfo{journal}{{\em Quality and Reliability Engineering
  International\/}} \bibinfo{volume}{18}, \bibinfo{number}{1}
  (\bibinfo{year}{2002}), \bibinfo{pages}{19--28}.
\newblock
\showISSN{1099-1638}


\bibitem[\protect\citeauthoryear{Falcone, Mascolo, Comito, Talia, and
  Crowcroft}{Falcone et~al\mbox{.}}{2014}]%
        {7026273}
\bibfield{author}{\bibinfo{person}{D. Falcone}, \bibinfo{person}{C. Mascolo},
  \bibinfo{person}{C. Comito}, \bibinfo{person}{D. Talia}, {and}
  \bibinfo{person}{J. Crowcroft}.} \bibinfo{year}{2014}\natexlab{}.
\newblock \showarticletitle{What is this place? Inferring place categories
  through user patterns identification in geo-tagged tweets}. In
  \bibinfo{booktitle}{{\em 6th International Conference on Mobile Computing,
  Applications and Services}}. \bibinfo{pages}{10--19}.
\newblock


\bibitem[\protect\citeauthoryear{Gao, Tang, and Liu}{Gao et~al\mbox{.}}{2012}]%
        {gao2012mobile}
\bibfield{author}{\bibinfo{person}{H. Gao}, \bibinfo{person}{J. Tang}, {and}
  \bibinfo{person}{H. Liu}.} \bibinfo{year}{2012}\natexlab{}.
\newblock \showarticletitle{Mobile location prediction in spatio-temporal
  context}. In \bibinfo{booktitle}{{\em Nokia mobile data challenge workshop}},
  Vol.~\bibinfo{volume}{41}. \bibinfo{pages}{44}.
\newblock


\bibitem[\protect\citeauthoryear{Giannotti, Nanni, Pinelli, and
  Pedreschi}{Giannotti et~al\mbox{.}}{2007}]%
        {Giannotti:2007}
\bibfield{author}{\bibinfo{person}{F. Giannotti}, \bibinfo{person}{M. Nanni},
  \bibinfo{person}{F. Pinelli}, {and} \bibinfo{person}{D. Pedreschi}.}
  \bibinfo{year}{2007}\natexlab{}.
\newblock \showarticletitle{Trajectory Pattern Mining}. In
  \bibinfo{booktitle}{{\em Proceedings of the 13th ACM SIGKDD International
  Conference on Knowledge Discovery and Data Mining}} {\em
  (\bibinfo{series}{KDD '07})}. \bibinfo{publisher}{ACM}, \bibinfo{address}{New
  York, NY, USA}, \bibinfo{pages}{330--339}.
\newblock
\showISBNx{978-1-59593-609-7}
\showDOI{%
\url{http://dx.doi.org/10.1145/1281192.1281230}}


\bibitem[\protect\citeauthoryear{Li, Wang, and Han}{Li et~al\mbox{.}}{2012}]%
        {Li:2012:MEP:2339530.2339604}
\bibfield{author}{\bibinfo{person}{Z. Li}, \bibinfo{person}{J. Wang}, {and}
  \bibinfo{person}{J. Han}.} \bibinfo{year}{2012}\natexlab{}.
\newblock \showarticletitle{Mining Event Periodicity from Incomplete
  Observations}. In \bibinfo{booktitle}{{\em Proceedings of the 18th ACM SIGKDD
  International Conference on Knowledge Discovery and Data Mining}} {\em
  (\bibinfo{series}{KDD '12})}. \bibinfo{pages}{444--452}.
\newblock
\showISBNx{978-1-4503-1462-6}


\bibitem[\protect\citeauthoryear{Lian, Xie, Zheng, Yuan, Zhang, and Chen}{Lian
  et~al\mbox{.}}{2015}]%
        {Lian:2015:CCE:2745393.2629557}
\bibfield{author}{\bibinfo{person}{D. Lian}, \bibinfo{person}{X. Xie},
  \bibinfo{person}{V.~W. Zheng}, \bibinfo{person}{N.~J. Yuan},
  \bibinfo{person}{F. Zhang}, {and} \bibinfo{person}{E. Chen}.}
  \bibinfo{year}{2015}\natexlab{}.
\newblock \showarticletitle{CEPR: A Collaborative Exploration and Periodically
  Returning Model for Location Prediction}.
\newblock \bibinfo{journal}{{\em ACM Trans. Intell. Syst. Technol.\/}}
  \bibinfo{volume}{6}, \bibinfo{number}{1}, Article \bibinfo{articleno}{8}
  (\bibinfo{date}{April} \bibinfo{year}{2015}), \bibinfo{numpages}{27}~pages.
\newblock
\showISSN{2157-6904}


\bibitem[\protect\citeauthoryear{Lv, Chen, Xu, Li, and Chen}{Lv
  et~al\mbox{.}}{2016}]%
        {lv2016discovery}
\bibfield{author}{\bibinfo{person}{M. Lv}, \bibinfo{person}{L. Chen},
  \bibinfo{person}{Z. Xu}, \bibinfo{person}{Y. Li}, {and} \bibinfo{person}{G.i
  Chen}.} \bibinfo{year}{2016}\natexlab{}.
\newblock \showarticletitle{The discovery of personally semantic places based
  on trajectory data mining}.
\newblock \bibinfo{journal}{{\em Neurocomputing\/}}  \bibinfo{volume}{173}
  (\bibinfo{year}{2016}), \bibinfo{pages}{1142--1153}.
\newblock


\bibitem[\protect\citeauthoryear{Montoliu, Blom, and Gatica-Perez}{Montoliu
  et~al\mbox{.}}{2013}]%
        {Montoliu2013}
\bibfield{author}{\bibinfo{person}{R. Montoliu}, \bibinfo{person}{J. Blom},
  {and} \bibinfo{person}{D. Gatica-Perez}.} \bibinfo{year}{2013}\natexlab{}.
\newblock \showarticletitle{Discovering places of interest in everyday life
  from smartphone data}. In \bibinfo{booktitle}{{\em Proceedings of the
  Multimedia Tools and Applications}}. \bibinfo{pages}{179--207}.
\newblock


\bibitem[\protect\citeauthoryear{Spaccapietra, Parent, Damiani, de~Macedo,
  Porto, and Vangenot}{Spaccapietra et~al\mbox{.}}{2008}]%
        {spaccapietra2008conceptual}
\bibfield{author}{\bibinfo{person}{S. Spaccapietra}, \bibinfo{person}{C.
  Parent}, \bibinfo{person}{M.~L. Damiani}, \bibinfo{person}{J.~A. de Macedo},
  \bibinfo{person}{F. Porto}, {and} \bibinfo{person}{C. Vangenot}.}
  \bibinfo{year}{2008}\natexlab{}.
\newblock \showarticletitle{A Conceptual View on Trajectories}.
\newblock \bibinfo{journal}{{\em Data Knowl. Eng.\/}} \bibinfo{volume}{65},
  \bibinfo{number}{1} (\bibinfo{date}{April} \bibinfo{year}{2008}),
  \bibinfo{pages}{126--146}.
\newblock
\showISSN{0169-023X}


\bibitem[\protect\citeauthoryear{Wang, Pedreschi, Song, Giannotti, and
  Barabasi}{Wang et~al\mbox{.}}{2011}]%
        {wang2011human}
\bibfield{author}{\bibinfo{person}{D. Wang}, \bibinfo{person}{D. Pedreschi},
  \bibinfo{person}{C. Song}, \bibinfo{person}{F. Giannotti}, {and}
  \bibinfo{person}{A. Barabasi}.} \bibinfo{year}{2011}\natexlab{}.
\newblock \showarticletitle{Human Mobility, Social Ties, and Link Prediction}.
  In \bibinfo{booktitle}{{\em Proceedings of the 17th ACM SIGKDD International
  Conference on Knowledge Discovery and Data Mining}} {\em
  (\bibinfo{series}{KDD '11})}. \bibinfo{pages}{1100--1108}.
\newblock
\showISBNx{978-1-4503-0813-7}


\bibitem[\protect\citeauthoryear{Wang, Yuan, Lian, X., Xie, Chen, and Rui}{Wang
  et~al\mbox{.}}{2015}]%
        {Wang:2015:RCL:2783258}
\bibfield{author}{\bibinfo{person}{Y. Wang}, \bibinfo{person}{N.~J. Yuan},
  \bibinfo{person}{D. Lian}, \bibinfo{person}{Linli X.}, \bibinfo{person}{X.
  Xie}, \bibinfo{person}{E. Chen}, {and} \bibinfo{person}{Y. Rui}.}
  \bibinfo{year}{2015}\natexlab{}.
\newblock \showarticletitle{Regularity and Conformity: Location Prediction
  Using Heterogeneous Mobility Data}. In \bibinfo{booktitle}{{\em Proceedings
  of the 21th ACM SIGKDD International Conference on Knowledge Discovery and
  Data Mining}} {\em (\bibinfo{series}{KDD '15})}. \bibinfo{pages}{1275--1284}.
\newblock
\showISBNx{978-1-4503-3664-2}


\bibitem[\protect\citeauthoryear{Wu, Wang, Li, Lee, and Huang}{Wu
  et~al\mbox{.}}{2015}]%
        {Wu:2015:SSA:2740908.2742837}
\bibfield{author}{\bibinfo{person}{F. Wu}, \bibinfo{person}{H. Wang},
  \bibinfo{person}{Z. Li}, \bibinfo{person}{W.~C. Lee}, {and}
  \bibinfo{person}{Z. Huang}.} \bibinfo{year}{2015}\natexlab{}.
\newblock \showarticletitle{SemMobi: A Semantic Annotation System for Mobility
  Data}. In \bibinfo{booktitle}{{\em Proceedings of the 24th International
  Conference on World Wide Web}} {\em (\bibinfo{series}{WWW '15 Companion})}.
  \bibinfo{pages}{255--258}.
\newblock
\showISBNx{978-1-4503-3473-0}


\bibitem[\protect\citeauthoryear{Yan, Chakraborty, Parent, Spaccapietra, and
  Aberer}{Yan et~al\mbox{.}}{2013}]%
        {Yan:2013}
\bibfield{author}{\bibinfo{person}{Z. Yan}, \bibinfo{person}{D. Chakraborty},
  \bibinfo{person}{C. Parent}, \bibinfo{person}{S. Spaccapietra}, {and}
  \bibinfo{person}{K. Aberer}.} \bibinfo{year}{2013}\natexlab{}.
\newblock \showarticletitle{Semantic Trajectories: Mobility Data Computation
  and Annotation}.
\newblock \bibinfo{journal}{{\em ACM Trans. Int. Syst. Tech.\/}}
  \bibinfo{volume}{4}, \bibinfo{number}{3}, Article \bibinfo{articleno}{49}
  (\bibinfo{date}{July} \bibinfo{year}{2013}), \bibinfo{numpages}{38}~pages.
\newblock
\showISSN{2157-6904}


\bibitem[\protect\citeauthoryear{Yang, Li, and Cudr{\'e}-Mauroux}{Yang
  et~al\mbox{.}}{2016}]%
        {yang2016poisketch}
\bibfield{author}{\bibinfo{person}{D. Yang}, \bibinfo{person}{B. Li}, {and}
  \bibinfo{person}{P. Cudr{\'e}-Mauroux}.} \bibinfo{year}{2016}\natexlab{}.
\newblock \showarticletitle{POIsketch: Semantic Place Labeling over User
  Activity Streams}. In \bibinfo{booktitle}{{\em Proceedings of the
  Twenty-Fifth International Joint Conference on Artificial Intelligence
  (IJCAI-16)}}.
\newblock


\bibitem[\protect\citeauthoryear{Yang, Zhang, Yu, and Yu}{Yang
  et~al\mbox{.}}{2013}]%
        {yang2013fine}
\bibfield{author}{\bibinfo{person}{D. Yang}, \bibinfo{person}{D. Zhang},
  \bibinfo{person}{Z. Yu}, {and} \bibinfo{person}{Z. Yu}.}
  \bibinfo{year}{2013}\natexlab{}.
\newblock \showarticletitle{Fine-grained preference-aware location search
  leveraging crowdsourced digital footprints from LBSNs}. In
  \bibinfo{booktitle}{{\em Proceedings of the 2013 ACM international joint
  conference on Pervasive and ubiquitous computing, (Ubicomp '13)}}.
  \bibinfo{pages}{479--488}.
\newblock


\bibitem[\protect\citeauthoryear{Yuan, Zheng, Xie, Wang, Zheng, and Xiong}{Yuan
  et~al\mbox{.}}{2015}]%
        {yuan2015discovering}
\bibfield{author}{\bibinfo{person}{N.~J. Yuan}, \bibinfo{person}{Y. Zheng},
  \bibinfo{person}{X. Xie}, \bibinfo{person}{Y. Wang}, \bibinfo{person}{K.
  Zheng}, {and} \bibinfo{person}{H. Xiong}.} \bibinfo{year}{2015}\natexlab{}.
\newblock \showarticletitle{Discovering Urban Functional Zones Using Latent
  Activity Trajectories}.
\newblock \bibinfo{journal}{{\em IEEE Trans. Knowl. and Data Eng.\/}}
  \bibinfo{volume}{27}, \bibinfo{number}{3} (\bibinfo{year}{2015}),
  \bibinfo{pages}{712--725}.
\newblock


\end{thebibliography}

\end{document}